\documentclass{article} 
\usepackage{iclr2027_conference,times}

\usepackage{amsmath,amssymb,amsthm}

\newtheorem{theorem}{Theorem}
\newtheorem{lemma}{Lemma}
 
\usepackage{amsmath,amsfonts,bm}

\def\eqref#1{equation~\ref{#1}}

\def\1{\bm{1}}

\DeclareMathAlphabet{\mathsfit}{\encodingdefault}{\sfdefault}{m}{sl}
\SetMathAlphabet{\mathsfit}{bold}{\encodingdefault}{\sfdefault}{bx}{n}

\usepackage{graphicx}
\usepackage{booktabs}
\usepackage{xcolor}
\usepackage{hyperref}
\usepackage{url}
\usepackage{microtype}
\usepackage{enumitem}
\usepackage{algorithm}
\usepackage{algpseudocode}
\usepackage{multirow}
\usepackage{wrapfig}

\newcommand{\ours}{\textsc{care}}
 
\title{CARE: Certifying Acceleration for Vision-Language-Action Inference}

\author{
Rui Liu$^{1}$,
Tong Zheng$^{1}$, 
Jindong Gu$^{2}$, 
Zhipeng Wang$^{3}$ \\
$^{1}$University of Maryland, College Park, 
$^{2}$University of Oxford,
$^{3}$Google
}

\iclrfinalcopy 
\begin{document}

\maketitle

\vspace{-8pt}
\begin{abstract}

\vspace{-6pt}

While vision-language-action (VLA) models have advanced rapidly, running them at every control step remains expensive. Prior work accelerates VLA inference using techniques like action chunking and visual-token pruning, typically evaluating based on latency and average task success. However, acceleration may discard information and break tasks the original policy would solve, a risk hidden by average metrics. Measuring these failures is challenging because action deviations compound over closed-loop trajectories, meaning task failure is only observable across full episodes. We therefore define an acceleration-induced failure via paired rollouts from identical initial conditions, tracking when the reference succeeds but the accelerated policy fails. To manage this, we introduce \ours{}, an approach for certified accelerator selection. \ours{} uses paired rollouts on a calibration set to provide finite-sample guarantees that acceleration-induced failure risk stays below a user-specified budget. It deploys the fastest certified candidate, falling back to the reference if none qualify. By relying only on terminal outcomes and measured compute, \ours{} applies unchanged across diverse acceleration mechanisms, while sequential testing and failure-triggered reference rollouts keep certification affordable. On four LIBERO suites with OpenVLA-OFT, \ours{} certifies $9.0$--$10.8\times$ speedups while guaranteeing (at $95\%$ confidence) that at least $85.8\%$ of reference-solved episodes are preserved. Under tight budgets, selectors without guarantees exceed the budget in up to $75\%$ of trials, whereas \ours{} stays within budget and its sequential form uses $78.9\%$ fewer rollouts than exhaustive evaluation. \ours{} further generalizes to flow-step reduction for $\pi_{0.5}$, and to Qwen3.5-9B and Llama-3.1-8B agents in Crafter.

\end{abstract}


\vspace{-10pt}
\section{Introduction}
\label{sec:intro}

\vspace{-6pt}
Vision-language-action (VLA) models have rapidly advanced multimodal decision-making by mapping visual observations and language instructions to actions~\citep{kim2024openvla, black2024pi0}. Their inference cost, however, remains a practical bottleneck: repeatedly running a large model in a closed-loop controller can make execution slow. A growing literature therefore reduces VLA inference cost through action chunking~\citep{kim2025fine}, visual-token pruning~\citep{chen2024image, liu2025bridging}, feature caching~\citep{xu2026vla}, dynamic inference~\citep{yue2024deer}, and fewer refinement steps for flow-matching policies~\citep{black2024pi0, intelligence2025pi05}. These approaches differ substantially in mechanism, but are typically evaluated using the same two quantities: inference cost and average task success.

These metrics describe how fast a policy is and how often it succeeds overall, but not what acceleration breaks. Acceleration may discard visual information, to which multimodal models are known to be
sensitive~\citep{liu2026reinforcing}, reuse stale features, or reduce re-planning frequency. A faster policy can therefore fail on tasks that the original policy would have solved. Average success does not isolate these regressions: failures shared by both policies count against the accelerated policy, while tasks newly solved by the accelerated policy can compensate for tasks that acceleration destroys. Two policies can consequently have similar average success while behaving very differently. This raises a key deployment question we ask:
\emph{how much can we accelerate a policy while bounding how often acceleration breaks tasks the reference already solves?}

We formalize this question through \emph{acceleration-induced failure}: from the same task and initial scene, the reference policy, run at full compute, succeeds while the accelerated policy fails. This event is defined over two complete closed-loop rollouts. Once acceleration changes an action, the next state changes as well, and every subsequent observation and decision may diverge from the reference trajectory.
We show that the resulting episode-level risk cannot in general be recovered from the two policies' marginal success rates or from per-decision measurements such as confidence or action coverage (Appendix~\ref{app:identification}), so it must be measured on paired, complete episodes. This makes certification challenging in two ways. Statistically, the risk must be bounded from a finite number of episodes, and choosing among many candidate accelerators creates many chances to accept an unsafe one by luck. Computationally, every evaluation requires closed-loop rollouts, including rollouts of the expensive reference policy that
acceleration is meant to avoid.

\begin{figure}[t]
\centering
\includegraphics[width=0.95\linewidth]{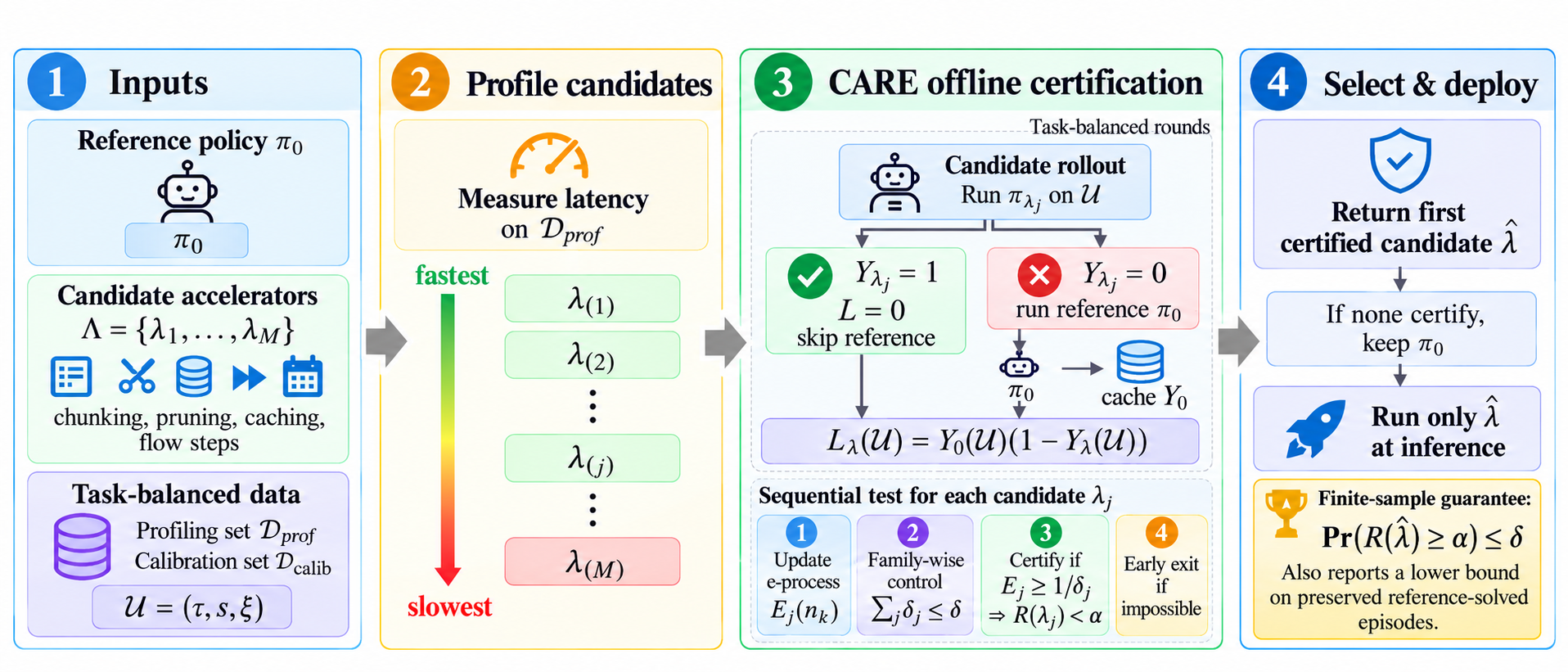}
\vspace{-8pt}
\caption{\textbf{Overview of \ours{}.} Given a reference policy $\pi_0$ and accelerated candidates $\Lambda=\{\lambda_1,\ldots,\lambda_M\}$, \ours{} first profiles candidate latency on a disjoint profiling set and orders candidates from fastest to slowest. On a separate calibration set, candidates are evaluated sequentially in task-balanced rounds, with each round containing the same number of episodes from every task. For each paired calibration episode, the reference and accelerated policy start from the same evaluation unit $U$, and \ours{} records the episode-level loss $L_\lambda(U)=Y_0(U)\bigl(1-Y_\lambda(U)\bigr)$, which equals one only when the reference succeeds and the accelerated policy fails. An anytime-valid test with family-wise error control certifies candidates while allowing early stopping. To reduce calibration cost, the reference policy is executed only when a candidate fails, and its outcome is cached for reuse. \ours{} returns the first, and therefore fastest, certified candidate; if none can be certified, it retains $\pi_0$. The returned policy satisfies the finite-sample deployment guarantee without adding overhead at inference.}
\label{fig:care}
\vspace{-13pt}
\end{figure}


We introduce \ours{}, an approach for certified accelerator selection. \ours{} runs once, offline, before deployment. Given a reference policy, a family of acceleration configurations, and a user-defined risk budget, it executes paired rollouts on a  calibration set, records whether each rollout succeeds, and constructs a finite-sample certificate for each candidate's acceleration-induced failure risk. Concretely, it leverages Learn-then-Test~\citep{angelopoulos2025learn} with finite-sample concentration bounds \citep{hoeffding1963probability,bentkus2004hoeffding} to test whether each candidate's risk is below the budget while controlling errors across the candidate family, and returns the fastest certified candidate, or retains the reference when the data do not support any acceleration. At deployment, only the selected policy runs. \ours{} is also model and accelerator-agnostic: it treats each candidate as a black-box policy and requires only terminal outcomes and measured compute, so the same procedure applies to action chunking, token pruning, feature caching, and their combinations. 


Beyond this formulation, \ours{} also makes certification itself adaptive and
computationally efficient, as illustrated in Figure~\ref{fig:care}. The candidate family is ordered by latency measured on a separate profiling set, and \ours{} evaluates candidates from fastest to slowest, stopping at the first one it certifies, so slower candidates need not be evaluated at all. For each candidate, it applies anytime-valid sequential tests on task-balanced rounds of scenes, stopping as soon as the evidence certifies the candidate or rules it out, with a guarantee that remains valid even though tasks differ widely in risk. It further avoids unnecessary reference executions: if a candidate succeeds on a scene, it cannot have caused an acceleration-induced failure there, so the reference is run only when a candidate fails, and its outcome is cached for later candidates. These mechanisms preserve the finite-sample guarantee while substantially reducing the rollout cost of certification.

 
We evaluate \ours{} with OpenVLA-OFT \citep{kim2025fine} on four LIBERO \citep{liu2023libero} suites, using candidates that combine action chunking with visual-token pruning and feature caching. \ours{} achieves $9.0$--$10.8\times$ speedups over one-step replanning while guaranteeing, simultaneously at $95\%$ confidence, that the selected policies preserve at least $85.8\%$ of reference-solved episodes. Under tight risk budgets, common selectors, such as choosing the fastest candidate, matching average success, or tuning on validation data, exceed the budget in $13$--$75\%$ of resampled trials, whereas \ours{} stays within budget in every trial by retaining the reference when the evidence is insufficient. Sequential fastest-first testing with failure-triggered reference evaluation reduces certification from $9{,}000$ to $1{,}997$ rollouts ($77.8\%$ fewer) and from $38.6$ to $7.0$ hours, running the reference only $47$ instead of $1{,}000$ times. On a larger family of $24$ configurations per suite, \ours{} uses $78.9\%$ fewer rollouts than exhaustive evaluation and is the only compared method that certifies acceleration on every suite. The same method certifies reduced flow steps for $\pi_{0.5}$ \citep{intelligence2025pi05} on LIBERO suites, and compute-reduced Qwen3.5-9B \citep{yang2025qwen3} and Llama-3.1-8B \citep{grattafiori2024llama} agents in Crafter \citep{hafner2021benchmarking}. In summary, our key contributions are:
\vspace{-6pt}
\begin{itemize}[left=0pt]

\item We propose \ours{}, a certify-then-select approach that returns the fastest candidate whose acceleration-induced failure risk is certified below a user-specified budget, with a finite-sample guarantee that holds across the candidate family and a lower bound on the fraction of reference-solved episodes preserved.

\item We make certification efficient through two designs tailored to paired, episode-level evaluation: task-balanced sequential testing that stops adaptively and stays valid across tasks of different risk, and failure-triggered reference evaluation that runs the reference only when a candidate fails, without changing any decision.


\item Across OpenVLA-OFT and $\pi_{0.5}$ on LIBERO and language-model agents in Crafter, \ours{} certifies large speedups where common selectors violate the budget, while reducing certification cost by nearly $80\%$.
\end{itemize}
\vspace{-10pt}
\section{Related Work}
\label{sec:related}

\vspace{-6pt}
\paragraph{Efficient inference and adaptive computation.}

Prior work reduces the inference cost of multimodal policies by changing either what is computed or how often computation occurs. VLA acceleration includes action chunking~\citep{kim2025fine}, visual-token pruning~\citep{chen2024image,liu2025bridging,wang2025specprune,pei2026action}, feature caching~\citep{xu2026vla}, and early-exit or dynamic-inference methods~\citep{schuster2022confident,yue2024deer}. Another line lowers deployment cost by transferring knowledge from large or multimodal teachers to
compact students, or by training with auxiliary modalities that are dropped at
inference~\citep{liu2026caml,liu2025mmcd}. For flow-matching policies such as $\pi_0$ and $\pi_{0.5}$, reducing the number of integration steps provides another compute knob~\citep{black2024pi0,intelligence2025pi05}. Related methods also reduce language-model reasoning through dynamic pruning or stopping, or learned allocation strategies~\citep{zheng2026parallel,dai2026small,zheng2026llms, zheng2026dream}. These approaches primarily optimize computation while preserving aggregate performance. They are evaluated by latency
and average task success, which reveal neither which tasks the faster policy newly fails nor whether its
success will hold on new scenes.

\ours{} instead determines whether an accelerated policy has sufficient evidence for deployment under a specified acceleration-induced failure budget. Using only paired terminal outcomes and measured inference cost, it is model agnostic and applies the same certification procedure to action chunking, token pruning, feature caching, flow-step reduction, and their combinations. \ours{} thereby identifies the fastest candidate that can be certified for deployment under a risk budget.

\vspace{-6pt}
\paragraph{Finite-sample risk control.}

Learn-then-Test (LTT) converts candidate selection into multiple hypothesis testing over a finite family~\citep{angelopoulos2025learn,bates2021distribution}, while Pareto Testing uses an estimated candidate ordering to improve selection efficiency~\citep{laufer2022efficiently}. Conformal risk control provides related guarantees under additional loss structure~\citep{angelopoulos2024conformal,angelopoulos2026conformal}. Conformal uncertainty has also been used to modulate how strongly a learner relies on noisy guidance signals~\citep{liu2026adaptive}. Anytime-valid inference, e-values, and test supermartingales permit repeated inspection and early stopping without invalidating the guarantee~\citep{ville1939etude,vovk2021values,ramdas2023gametheoretic,howard2021time}. These tools have been applied to early exits and adaptive computation~\citep{schuster2022confident,jazbec2024fast}. Safe policy improvement instead compares expected return with a baseline~\citep{thomas2015hcpi},
risk-sensitive and distributionally robust methods optimize tail or worst-case
performance~\citep{liu2023data, liu2024towards}, and negative-flip analyses examine inputs on which an
updated model fails after the original succeeds~\citep{yan2021positive}.

Unlike prior work on per-prediction consistency or negative flips over fixed inputs, \ours{} targets closed-loop acceleration risk: the probability that an accelerated policy fails on an episode the reference would solve. Because acceleration changes actions and therefore future states, this risk cannot in general be inferred from marginal success rates, average-success differences, or per-step guarantees; it must be certified on paired episodes. To make this certification computationally tractable while preserving safety guarantees, \ours{} introduces two novel mechanisms: \emph{task-balanced sequential certification} to preserve anytime validity under heterogeneous task risks, and \emph{failure-triggered reference execution}, which avoids running the reference whenever the accelerated policy already succeeds. Together, these mechanisms retain the deployment guarantee while reducing the cost of closed-loop certification.

\vspace{-6pt}
\section{Approach}
\label{sec:app}

\vspace{-6pt}
We present \ours, a training-free approach that decides which accelerator may safely replace a reference policy. \ours{} starts from a computationally heavy reference policy $\pi_0$ and a family of $M$ candidate acceleration configurations $\Lambda=\{\lambda_1,\ldots,\lambda_M\}$. Each configuration $\lambda$, such as an action-chunk length, a visual-token pruning ratio, or a number of flow-integration steps, turns $\pi_0$ into a faster policy $\pi_\lambda$. \ours{} identifies the fastest candidate it can certify as safe on a calibration set, falling back to $\pi_0$ if
no candidate qualifies.

\vspace{-6pt}
\subsection{Problem Formulation}
\label{sec:formulation}

\vspace{-6pt}
Let an evaluation unit $U=(\tau,s,\xi)$ consist of a task index $\tau\in\{1,\ldots,T\}$, an initial scene
$s$ of task $\tau$, and the random seeds $\xi$ used during execution. Running the reference and a candidate on the same unit, that is, from the same scene with the same seeds, is a paired evaluation, and yields binary outcomes $Y_0(U),Y_\lambda(U)\in\{0,1\}$, with $1$ denoting success. Pairing ensures that a
difference between the two outcomes reflects the policies rather than different starting conditions. Units are drawn independently and divided into disjoint sets: a calibration set $\mathcal{D}_{\text{calib}}$ used to certify and select, and a test set $\mathcal{D}_{\text{test}}$ used
to evaluate the selected policy.

 
To evaluate safety, we must isolate failures caused specifically by the acceleration method. Standard marginal metrics (comparing overall success rates) fail to do this: they cannot distinguish an acceleration-induced failure from a task that was simply too difficult for the reference policy to
solve, and they let tasks the accelerated policy happens to repair cancel tasks it breaks. Nor can such failures be measured from individual decisions, since a cheaper decision changes every state that follows (Appendix~\ref{app:identification}). Therefore, we define the acceleration-induced failure (AIF) loss as $L_\lambda(U)=Y_0(U)\bigl(1-Y_\lambda(U)\bigr)\in\{0,1\}$, which equals $1$ if and only if acceleration degrades a reference success into a failure. Deployment risk is then defined under an equal-weight task mixture, so that every task contributes equally: $R(\lambda)=\frac{1}{T}\sum_{t=1}^{T}\mathbb{E}[L_\lambda(U)\mid\tau=t]$.
 
Let $\lambda_0$ denote the reference itself, so $R(\lambda_0)=0$. Given a risk budget $\alpha\in(0,1)$, the largest acceptable AIF risk, and an error level $\delta\in(0,1)$, we seek a rule that maps $\mathcal{D}_{\text{calib}}$ to a configuration $\widehat\lambda\in\Lambda\cup\{\lambda_0\}$, is as fast as possible, and satisfies the PAC-style guarantee~\citep{valiant2026theory}:
\begin{equation}
\Pr\bigl(R(\widehat\lambda)\ge\alpha\bigr)\le\delta,
\label{eq:objective}
\end{equation}
where the probability is over the calibration set and its executions.

We use $R(\lambda)$ as the deployment constraint because it directly measures the probability that acceleration causes a regression on a random deployment episode. We also report the retention $\operatorname{Ret}(\lambda)=\Pr(Y_\lambda{=}1\mid Y_0{=}1)=1-R(\lambda)/\Pr(Y_0{=}1)$, the fraction of reference-solved episodes that acceleration preserves, with expectations under the same task mixture. Hence $R(\lambda)<\alpha$ implies $\operatorname{Ret}(\lambda)>1-\alpha/\Pr(Y_0{=}1)$: the two criteria agree when the reference is reliable but can diverge when it often fails. \ours{} reports a simultaneous lower confidence bound on $\operatorname{Ret}(\widehat\lambda)$ and, when retention is the requirement, can certify a retention budget directly (Appendix~\ref{app:retention_theory}). The two summaries answer complementary questions: AIF risk controls the deployment-wide probability of regression, while retention reports how much of the reference's solved set is preserved.
 

\vspace{-6pt}
\subsection{Certify Then Select}
\label{sec:certification}

\vspace{-6pt}
To achieve the objective, we formulate candidate selection as a statistical hypothesis testing problem following the Learn-then-Test framework~\citep{angelopoulos2025learn}. For each candidate $\lambda_j$, we test the null hypothesis that it is unsafe: $H_j:R(\lambda_j)\ge\alpha$. Candidate $\lambda_j$ is certified for deployment only if $H_j$ is rejected at significance level $\delta_j$.

\vspace{-6pt}
\paragraph{Family-wise error rate control.}
Testing multiple candidates inflates the probability of a false positive, i.e., certifying an unsafe candidate by sheer sampling chance. To prevent this, we bound the family-wise error rate (FWER) by allocating individual error budgets such that $\sum_{j=1}^{M}\delta_j\le\delta$, e.g., $\delta_j=\delta/M$. By the union bound, the overall probability of falsely certifying \emph{any} unsafe candidate across the
entire pool remains bounded by $\delta$. Outside this event every certified candidate is safe, so deploying the fastest certified candidate, or $\lambda_0$ if none is certified, satisfies
Eq.~\ref{eq:objective}.

\vspace{-6pt}
\paragraph{Fixed-sample $p$-values.}
Suppose we evaluate a candidate on a fixed sample of $n$ calibration units. We write
$L_{j,i}=L_{\lambda_j}(U_i)$ and $\widehat R_j=\frac1n\sum_{i=1}^nL_{j,i}$. A small $\widehat R_j$ is evidence against $H_j$, and a $p$-value $\widehat p_j$ quantifies it; $\widehat p_j$ is \emph{valid} if
$\Pr(\widehat p_j\le s)\le s$ for all $s\in[0,1]$ whenever $H_j$ holds. We use the Hoeffding--Bentkus
$p$ value~\citep{bates2021distribution,bentkus2004hoeffding}, which is valid for $[0,1]$-valued losses at
every sample size without distributional assumptions (Eq.~\ref{eq:hb_pvalue},
Appendix~\ref{app:assumptions}), and certify $\lambda_j$ if $\widehat p_j\le\delta_j$.

\vspace{-6pt}
\subsection{Task-Balanced Sequential Certification}
\label{sec:anytime}
 
While the fixed-sample test is statistically valid, evaluating every candidate over the entire calibration set is computationally wasteful. We instead want to halt evaluation the moment a candidate gathers enough evidence to be certified, or accumulates too many failures to ever recover. To this end, we extend \ours{} into an adaptive sequential certification procedure (Algorithm~\ref{alg:care},
Appendix~\ref{app:algorithm}).

\vspace{-6pt}
\paragraph{Task-balanced rounds.}
Benchmarks provide scenes task by task, and tasks differ widely in risk. If units arrived in task order, an early inspection could see only low-risk tasks and certify a candidate that is unsafe on average. We therefore group calibration units into rounds of $m$ units per task, $B=mT$ units in total, and inspect
the evidence only at round boundaries $n_k=kB$, $k=1,\ldots,K$, so every inspection reflects all tasks
in their target proportions.

\vspace{-6pt}
\paragraph{Anytime-valid evidence.}
Recomputing $\widehat p_j$ at every inspection would inflate the error, since each inspection is another chance for a lucky fluctuation. We instead use an $e$-process, whose validity does not depend on when
evaluation stops. With $f_j(n_k)=\sum_{i=1}^{n_k}L_{j,i}$ the number of AIF losses among the first
$n_k$ units,
\begin{equation}
E_j(n_k)=\frac{1}{|\mathcal{Q}|}\sum_{q\in\mathcal{Q}}
\left(\frac{q}{\alpha}\right)^{f_j(n_k)}\left(\frac{1-q}{1-\alpha}\right)^{n_k-f_j(n_k)},
\qquad\mathcal{Q}\subset[0,\alpha).
\label{eq:eprocess}
\end{equation}
Each term is a likelihood ratio comparing the observed losses under a safe risk $q<\alpha$ against the
boundary risk $\alpha$; it grows when losses are rarer than $\alpha$ predicts. Since the true risk is
unknown, we average over a finite grid $\mathcal{Q}$ of safe values. Under $H_j$, task-balanced rounds
make $E_j$ a non-negative supermartingale across inspections, i.e., it does not grow in expectation,
even though tasks differ in risk (Lemma~\ref{lem:round_supermartingale}). By Ville's inequality~\citep{ville1939etude}, the probability that $E_j(n_k)$ ever reaches $1/\delta_j$ at any inspection is bounded by $\delta_j$. Thus, candidate $\lambda_j$ is safely certified the moment
$E_j(n_k)\ge1/\delta_j$, even though this moment depends on the data.

\vspace{-6pt}
\paragraph{Fastest-first search \& early exit.}
To find the fastest safe controller, candidates are pre-sorted from fastest to slowest using the disjoint, small profiling set $\mathcal{D}_{\text{prof}}$ dedicated solely to measuring latency. Algorithm~\ref{alg:care} halts and deploys the first certified candidate it finds. Conversely, if a candidate's early losses are so severe that $E_j(n_K)<1/\delta_j$ even if zero future losses were to
occur ($L_{j,i}=0$ for $i>n_k$), the candidate can never be certified, and an early exit immediately abandons it to save compute (Appendix~\ref{app:validity}).

\vspace{-6pt}
\subsection{Failure-Triggered Reference Evaluation}
\label{sec:lazy}
 
While sequential testing reduces the number of calibration units each candidate needs, executing the heavy reference policy $\pi_0$ during calibration remains a computational bottleneck. \ours{} mitigates
this via a failure-triggered mechanism.
 
Exploiting the structure of the AIF loss $L_{j,i}=Y_0(U_i)\bigl(1-Y_j(U_i)\bigr)$, we observe that if candidate $\lambda_j$ succeeds ($Y_j(U_i)=1$), the loss is $L_{j,i}=0$ regardless of $Y_0(U_i)$, rendering the reference outcome irrelevant. Therefore, the reference $\pi_0$ is executed only when the
candidate fails ($Y_j(U_i)=0$). Furthermore, reference outcomes $Y_0(U_i)$ are stored in a cache $\mathcal{C}$ shared across candidates, so a later failure on the same unit reuses the cached outcome. As a result, $\pi_0$ is executed at most once per unit, and only on units where some evaluated candidate
fails, bounding the total number of reference executions by $N_{\text{ref}}=|\mathcal{C}|\le\min\{n_K,\sum_jF_j\}$, where $F_j$ is the number of failures of $\lambda_j$, while yielding exactly the same losses, and hence the same certification decisions, as
exhaustive paired evaluation (Lemma~\ref{lem:failure_triggered}, Appendix~\ref{app:validity}).
 
Combining fastest-first sequential certification with failure-triggered reference evaluation gives the complete \ours{} controller. Candidate configurations are considered in increasing order of measured
inference cost, evaluated on task-balanced rounds, and certified with anytime-valid evidence. The accelerated policy induced by the first certified configuration is deployed; if no candidate can be certified, the reference policy $\pi_0$ is retained. The candidate ordering is fixed in advance on
$\mathcal{D}_{\text{prof}}$, and early exit and failure-triggered reference evaluation affect only calibration cost, not which candidate is certified, so the finite-sample guarantee in Eq.~\ref{eq:objective} holds for the deployed configuration (Theorem~\ref{thm:risk_control}).

\vspace{-6pt}
\section{Experiments}
\label{sec:exp}
 
\vspace{-5pt}
 
We evaluate \ours{} across different benchmarks, LIBERO~\citep{liu2023libero} and Crafter~\citep{hafner2021benchmarking}, spanning multiple model families, including
OpenVLA-OFT~\citep{kim2025fine}, the flow-matching policy
$\pi_{0.5}$~\citep{intelligence2025pi05}, and the language-model agents Qwen3.5-9B~\citep{yang2025qwen3} and
Llama-3.1-8B~\citep{grattafiori2024llama}. Our experiments address four research questions: \textbf{Q1:} Does \ours{} select useful
accelerated controllers while maintaining family-level AIF guarantees? \textbf{Q2:} How does \ours{}
compare against baseline selectors in maintaining safe risk control? \textbf{Q3:} Do sequential and
reference-query rules effectively reduce certification cost? \textbf{Q4:} Does \ours{} generalize across
diverse model architectures and tasks?
 
\vspace{-6pt}
\subsection{Certified OpenVLA Deployment}
\label{sec:primary_deployment}
 
We first evaluate OpenVLA-OFT on the four LIBERO suites (Figure \ref{fig:libero_tasks}, Appendix \ref{app:benchmarks}), Goal, Spatial, Object, and Long, each with ten
tasks, using one-step replanning (H1) as the reference. The candidate family contains eight
configurations, all with action horizon 8: plain execution, FastV-25, FastV-50,
FastV-75~\citep{chen2024image}, VLA-Cache~\citep{xu2026vla}, VLA-Pruner~\citep{liu2025bridging},
SpecPrune~\citep{wang2025specprune}, and VLA-ADP~\citep{pei2026action}. We
evaluate 500 matched scenes per suite, for 2,000 initial scenes and 18,000 rollouts across the reference
and candidates. Certification uses 250 task-balanced episodes per suite; the remaining 250 episodes form the test set.
 
\begin{wraptable}{r}{0.5\textwidth}
\vspace{-20pt}
\caption{\textbf{Certified CARE deployments.} Speedup is relative to H1; preserved is the simultaneous
95\% lower bound on the fraction of H1-solved episodes retained.}
\label{tab:primary_deployment}
\vspace{2pt}
\small
\begin{tabular}{lccc}
\toprule
Suite & Selected policy & Speedup & Preserved \\
\midrule
Goal    & FastV-25 H8 & $9.02\times$  & $93.5\%$ \\
Spatial & FastV-75 H8 & $10.08\times$ & $95.7\%$ \\
Object  & Plain H8    & $9.56\times$  & $92.2\%$ \\
Long    & VLA-Cache H8 & $10.81\times$ & $85.8\%$ \\
\bottomrule
\end{tabular}
\vspace{-10pt}
\end{wraptable}
 
For LIBERO, each evaluation unit fixes the task, initial scene, and random seed before either policy is
executed. We set the AIF risk budget to $\alpha=5.0\%$ and the error probability to $\delta=5.0\%$; see a sensitivity analysis in  Appendix~\ref{app:sensitivity}. On the test set, we report two metrics: (1) speedup over reference, the ratio of the reference policy's mean
inference time to that of the deployed controller (larger is faster); and (2) preserved success, a
family-corrected $95\%$ lower confidence bound on the fraction of reference-solved episodes preserved under
acceleration. The family-wise correction keeps the reported bounds valid after selecting among multiple
candidates across all four suites.

Table~\ref{tab:primary_deployment} summarizes the certification outcomes. \ours{} certifies an
accelerated policy on every suite, achieving $9.02$--$10.81\times$ speedups, while the simultaneous lower
bounds guarantee that the selected policies preserve $85.8\%$--$95.7\%$ of reference-solved episodes.
These results answer \textbf{Q1}: \ours{} selects useful accelerated policies with family-level
guarantees.
 
 
\vspace{-6pt}
\subsection{Comparison with Deployment Selectors}
\label{sec:baselines}
 
\vspace{-6pt}
We compare \ours{} with alternative selectors through controlled resampling. For each LIBERO suite, the
500 stored paired episodes define an empirical population. In each of 10,000 trials, we sample 250
calibration episodes, apply each method, and evaluate the selected policy against the full population. We
organize the comparison around two questions. First, under tight risk budgets, do commonly used selectors
respect the requested risk? Second, at the standard budget, when all risk-controlled methods are valid,
which method selects faster policies and requires fewer evaluation rollouts?

\vspace{-6pt}
\paragraph{Selection validity under tighter budgets.}
We first examine whether each method respects the requested risk budget. Fixed-$n$ \ours{} evaluates
every candidate on all 250 calibration episodes and applies a family-corrected certificate. Latency only
always selects the fastest policy. Average success selects the fastest policy whose empirical success is
within one percentage point of H1. Validation tuned selects the fastest policy whose observed AIF rate is
below $\alpha$, without correcting for selection over multiple candidates. The unpaired test compares
marginal success rates rather than paired AIF outcomes. We report the over-budget rate, the percentage of
trials in which the selected policy's population AIF exceeds $\alpha$.
 
As shown in Table~\ref{tab:tight_budget_validity}, \ours{} maintains validity under tighter budgets by
safely retaining H1 when no accelerated policy has sufficient evidence for certification; its validity
thus reflects safe fallback rather than forced acceleration. In contrast, latency-only and
average-success selection exceed the budget in $75.0\%$ and $25.0\%$ of trials at $\alpha=2.0\%$ and
$2.5\%$, respectively, and validation tuning and the unpaired test also frequently violate it. Their
apparent speedups arise because they almost always deploy an accelerated policy despite insufficient
evidence. These results directly answer \textbf{Q2}, confirming that \ours{} delivers safe risk control
where common selectors do not.

\begin{wraptable}{r}{0.5\textwidth}
\vspace{-22pt}
\centering
\caption{\textbf{Over-budget rate (\%) under tighter AIF budgets.} \ours{} has no observed over-budget deployments, whereas the other selectors exceed the budget in $13.0$--$75.0\%$ of trials.}
\label{tab:tight_budget_validity}

\vspace{2pt}
\small
\setlength{\tabcolsep}{10pt}
 
\begin{tabular}{lcc}
\toprule
Method
& {$\alpha=2.0\%$}
& {$\alpha=2.5\%$} \\
\midrule
Latency only       & $75.0$          & $25.0$ \\
Average success    & $75.0$          & $25.0$ \\
Validation tuned   & $36.8$          & $13.0$ \\
Unpaired test      & $67.1$          & $19.5$ \\
\ours  & $\mathbf{0.0}$ & $\mathbf{0.0}$ \\
\bottomrule
\end{tabular}
\vspace{-10pt}
\end{wraptable}

\vspace{-6pt}
\paragraph{Deployment speed and evaluation cost.}
We next compare selectors at the standard budget $\alpha=5.0\%$, reporting mean speedup relative to H1,
fallback rate, and mean evaluation rollouts. Pareto Testing splits episodes to rank candidates by
estimated risk before testing, while cost-order LTT tests policies from fastest to slowest; both ordered
methods stop at the first uncertifiable policy. Safe policy improvement controls net success relative to
H1, allowing newly solved episodes to offset failures, and therefore does not directly control AIF.
Sequential \ours{} tests paired AIF with interim evidence checks, stops early when certification is
achieved or unattainable, and runs H1 only after an accelerated policy fails.
 
As shown in Table~\ref{tab:selector_validity}, fixed-$n$ \ours{} achieves the highest mean speedup among
risk-controlling selectors ($8.88\times$ with $11.7\%$ fallback). In contrast, Pareto Testing
($6.16\times$, $42.2\%$ fallback) and cost-order LTT ($6.35\times$, $45.5\%$ fallback) fall back more often
because their rigid evaluation order can halt before reaching a certifiable policy, and safe policy
improvement yields $5.30\times$ with $60.5\%$ fallback. Crucially, sequential \ours{} maintains a
competitive $7.32\times$ speedup ($26.9\%$ fallback) while reducing evaluation cost from 2,250 to just 592
rollouts. These results directly answer \textbf{Q3}, confirming that sequential and reference-query rules
substantially reduce certification cost.
 
\vspace{-10pt}
\begin{table}[ht]
\centering
 
\caption{\textbf{Deployment speed and evaluation cost.}
Fixed-$n$ \ours{} achieves the highest mean speedup; sequential \ours{} reduces evaluation cost by
$73.7\%$ while keeping a $7.32\times$ speedup and less fallback than all baselines.}
\label{tab:selector_validity}
\vspace{3pt}
\small
\begin{tabular}{lrrr}
\toprule
Method
& \shortstack{Mean speedup ($\times$)}
& \shortstack{Fallback (\%)}
& \shortstack{Mean rollouts} \\
\midrule
\ours, fixed-$n$              & $\mathbf{8.88}$ & $11.7$ & $2{,}250$ \\
Pareto Testing                & $6.16$          & $42.2$ & $2{,}250$ \\
Cost-order fixed-sequence LTT & $6.35$          & $45.5$ & $2{,}250$ \\
Safe policy improvement       & $5.30$          & $60.5$ & $2{,}250$ \\
\textbf{\ours, sequential}    & $7.32$          & $26.9$ & $\mathbf{592}$ \\
\bottomrule
\end{tabular}
\vspace{-5pt}
\end{table}

\vspace{-6pt}
\subsection{Evaluation with a Broader Candidate Family}
\label{sec:broader_family}
 
\vspace{-6pt}
We expand the candidate family to 24 OpenVLA-OFT configurations formed by four action horizons,
$\mathrm{H}\in\{2,4,6,8\}$, and six inference modes: plain inference, FastV-25, FastV-50, FastV-75,
VLA-Cache, and VLA-Pruner. We use fresh calibration streams while retaining the same H1 reference, AIF
budget, and family-level risk control. This experiment tests whether \ours{} remains effective and
evaluation-efficient when certification must search a larger and more heterogeneous policy family.
 
\vspace{-6pt}
\paragraph{Baselines and metrics.}
Full-sample Hoeffding--Bentkus (HB) evaluates all 24 candidates and H1 on all 250 calibration episodes.
Sequential HB and sequential exact inspect the evidence at five pre-specified points, using
Hoeffding--Bentkus and exact-binomial tests, respectively. The plug-in e-process is an alternative
anytime-valid sequential test. \ours{} combines task-balanced sequential testing with fastest-first
evaluation, early abandonment of candidates that can no longer certify, and failure-triggered H1
evaluation. For each method, we report the selected acceleration mechanism and horizon per suite, and
measure evaluation cost by candidate rollouts, H1 rollouts, their total, and the reduction relative to
full-sample HB.
 
\begin{table}[ht]
\centering
\caption{\textbf{Evaluation with a broader acceleration candidate family.}
Reference denotes fallback to H1. \ours{} is the only method that certifies an accelerated policy on every
suite, using 5,263 rollouts, $78.9\%$ fewer than full-sample HB and fewer than every sequential
baseline. Rollouts are aggregated over the four LIBERO suites.}
\label{tab:main_care}
\vspace{3pt}
 
\resizebox{0.92\textwidth}{!}{
\begin{tabular}{lllcrrrr}
\toprule
Method &
Suite &
Acceleration &
Horizon &
\shortstack{Candidate\\rollouts} &
\shortstack{Reference\\rollouts} &
\shortstack{Total\\rollouts} &
\shortstack{Rollout\\saving (\%)} \\
\midrule
 
\multirow{4}{*}{Full-sample HB}
& Goal    & FastV-50   & H8
& \multirow{4}{*}{$24{,}000$}
& \multirow{4}{*}{$1{,}000$}
& \multirow{4}{*}{$25{,}000$}
& \multirow{4}{*}{$0.0$} \\
& Spatial & VLA-Cache  & H2 & & & & \\
& Object  & VLA-Pruner & H8 & & & & \\
& Long    & Reference  & -- & & & & \\
\midrule
 
\multirow{4}{*}{Sequential HB}
& Goal    & Reference  & --
& \multirow{4}{*}{$7{,}150$}
& \multirow{4}{*}{$156$}
& \multirow{4}{*}{$7{,}306$}
& \multirow{4}{*}{$70.8$} \\
& Spatial & Reference  & -- & & & & \\
& Object  & VLA-Pruner & H8 & & & & \\
& Long    & Reference  & -- & & & & \\
\midrule
 
\multirow{4}{*}{Sequential exact}
& Goal    & FastV-50   & H8
& \multirow{4}{*}{$6{,}350$}
& \multirow{4}{*}{$135$}
& \multirow{4}{*}{$6{,}485$}
& \multirow{4}{*}{$74.1$} \\
& Spatial & VLA-Cache  & H2 & & & & \\
& Object  & VLA-Pruner & H8 & & & & \\
& Long    & Reference  & -- & & & & \\
\midrule
 
\multirow{4}{*}{Plug-in e-process}
& Goal    & FastV-25   & H8
& \multirow{4}{*}{$5{,}750$}
& \multirow{4}{*}{$123$}
& \multirow{4}{*}{$5{,}873$}
& \multirow{4}{*}{$76.5$} \\
& Spatial & Reference  & -- & & & & \\
& Object  & VLA-Pruner & H8 & & & & \\
& Long    & Reference  & -- & & & & \\
\midrule
 
\multirow{4}{*}{\textbf{\ours}}
& Goal    & \textbf{FastV-25}     & \textbf{H8}
& \multirow{4}{*}{$\mathbf{5{,}150}$}
& \multirow{4}{*}{$\mathbf{113}$}
& \multirow{4}{*}{$\mathbf{5{,}263}$}
& \multirow{4}{*}{$\mathbf{78.9}$} \\
& Spatial & \textbf{FastV-75}  & \textbf{H8} & & & & \\
& Object  & \textbf{Plain}     & \textbf{H8} & & & & \\
& Long    & \textbf{VLA-Cache} & \textbf{H8} & & & & \\
\bottomrule
\end{tabular}
}
\vspace{-8pt}
\end{table}
 
Table~\ref{tab:main_care} shows that expanding the candidate family does not diminish the effectiveness
or efficiency of \ours{}, which certifies an accelerated policy on all four suites. In contrast, full-sample HB and
sequential exact retain H1 on Long, the plug-in e-process retains H1 on Spatial and Long, and sequential
HB certifies acceleration only on Object. \ours{} also has the lowest evaluation cost, using 5,263 rather
than 25,000 rollouts ($78.9\%$ fewer) and reducing H1 evaluation from 1,000 to 113 rollouts. Compared with
the sequential baselines, it uses 610, 1,222, and 2,043 fewer rollouts than the plug-in e-process,
sequential exact, and sequential HB, respectively. Thus, over the broader family, \ours{} provides both the
widest certification coverage and the largest rollout saving.
 
\vspace{-6pt}
\subsection{Cross-Architecture and Cross-Domain Evaluation}
\label{sec:generalization}
 
\ours{} requires only paired terminal outcomes and measured inference cost; it does not depend on a
particular policy architecture or acceleration mechanism. We evaluate this method on the flow-matching
VLA policies $\pi_0$ and $\pi_{0.5}$,  and on language-model agents with
Qwen3.5-9B and Llama-3.1-8B in Crafter (Figure \ref{fig:crafter_tasks}, Appendix \ref{app:benchmarks}).


\vspace{-6pt}
\paragraph{Flow-step selection for $\pi_{0.5}$.}
Beyond OpenVLA-OFT, we evaluate $\pi_{0.5}$ and use its default ten-step inference with action horizon H8
as the reference. The candidate family contains the same policy with 2, 4, or 6 flow-refinement
steps, ordered from lowest to highest computation. For each LIBERO suite, we use 250 task-stratified
episodes, examine the evidence every 50 episodes, and control the three-candidate family at $\alpha=0.05$.
Exhaustive evaluation would require 1,000 rollouts per suite.
 
Table~\ref{tab:pi05_family} shows that \ours{} generalizes to a different VLA architecture. It selects the
two-step policy on Spatial and Object, reducing flow computation by $5\times$, and the four-step policy on
Goal, reducing it by $2.5\times$. On Long, \ours{} stops all three candidates early and retains the
reference, requiring 464 rather than 1,000 rollouts. Across four suites, it uses 1,250 candidate
rollouts and only 26 reference rollouts owing to failure-triggered evaluation, $68.1\%$ fewer than
exhaustive evaluation.
 
\vspace{-8pt}
\begin{table*}[ht]
\centering
\caption{\textbf{Certified flow-step selection for $\pi_{0.5}$ on LIBERO.}
\ours{} certifies reduced-step policies on Goal, Spatial, and Object and retains the ten-step reference on
Long, using 1,276 rather than 4,000 rollouts ($68.1\%$ fewer). Compute and speedup are relative to the
ten-step reference.}
\label{tab:pi05_family}
\vspace{3pt}
\small
\resizebox{0.8\textwidth}{!}{
\begin{tabular}{llcccc}
\toprule
Suite &
Selected policy &
\shortstack{Compute\\(\% ref.)} &
\shortstack{Speedup\\($\times$)} &
\shortstack{Total\\rollouts} &
\shortstack{Rollout\\saving (\%)} \\
\midrule
Goal    & Four-step  & 40  & 1.26 & 305 & 69.5 \\
Spatial & Two-step   & 20  & 1.45 & 254 & 74.6 \\
Object  & Two-step  & 20  & 1.45 & 253 & 74.7 \\
Long    & Reference        & 100 & 1.00 & 464 & 53.6 \\
\midrule
\multicolumn{4}{r}{\textbf{Total}} &
$\mathbf{1{,}276}$ &
$\mathbf{68.1}$ \\
\bottomrule
\end{tabular}
}
\vspace{-15pt}
\end{table*}

 
\paragraph{Compute-reduced language-model agents in Crafter.}
Next, we evaluate whether \ours{} extends beyond robotic policies. Qwen3.5-9B and Llama-3.1-8B agents
act in Crafter using learned schedules that reduce language-model computation on the \texttt{collect\_wood} and \texttt{collect\_drink} tasks. Each schedule is trained before certification
and evaluated on a reserved stream. As shown in Table~\ref{tab:crafter_main}, both models certify their
\texttt{collect\_wood} schedules on every seed while reducing compute by 25.1\%--30.6\%, and the
\texttt{collect\_drink} schedules certify on two of three seeds with smaller reductions. Together, the
$\pi_{0.5}$ and Crafter results directly answer \textbf{Q4}, confirming that \ours{} supports
different policy architectures, compute controls, benchmarks, and task domains.

\vspace{-6pt}
\subsection{Design Analysis and Ablations}
\label{sec:ablation}

\vspace{-6pt}
We analyze five components of \ours{} in two groups. The validity analysis evaluates (1) paired
episode-level outcomes, (2) family correction when selecting among multiple candidates, and (3)
task-balanced acquisition at sequential looks. The efficiency analysis isolates (4) sequential
fastest-first candidate evaluation and (5) failure-triggered reference evaluation. Complete settings and
results are in Appendix~\ref{app:design_analysis}.

 \begin{wraptable}{r}{0.5\textwidth}
\vspace{-20pt}
\centering
\caption{\textbf{Certified compute reduction for Crafter agents.}
\texttt{collect\_wood} schedules certify on all seeds at 69.4\%--74.9\% of reference compute;
\texttt{collect\_drink} schedules certify on two of three with compute reduced.}
\label{tab:crafter_main}
\vspace{2pt}
\setlength{\tabcolsep}{5.5pt}
\begin{tabular}{llcc}
\toprule
Model &
Task &
\shortstack{Certified\\seeds} &
\shortstack{Compute\\(\% ref.)} \\
\midrule
Qwen3.5-9B   & Wood  & $3/3$ & 69.4 \\
Qwen3.5-9B   & Drink & $2/3$ & 85.8 \\
Llama-3.1-8B & Wood  & $3/3$ & 74.9 \\
Llama-3.1-8B & Drink & $2/3$ & 98.6 \\
\bottomrule
\end{tabular}
\vspace{-5pt}
\end{wraptable}
 
\vspace{-6pt}
\paragraph{Validity analysis.}
Local acceleration effects do not consistently predict whole-episode AIF. For example, the local and
episode-level rates are $4.3\%$ and $1.0\%$ on Object, compared with $9.5\%$ and $2.7\%$ on Long
(Table~\ref{tab:local_episode}), which supports certifying complete paired episodes rather than individual
decisions. Family correction is also necessary (Table~\ref{tab:multiplicity}): in a controlled
ten-candidate simulation, uncorrected selection deploys an over-budget policy in $46.0\%$ of trials,
whereas family-corrected testing remains at or below $0.15\%$. Finally, balanced round-robin acquisition
(Table~\ref{tab:balanced_order}) stays within the candidate-level error budget for every tested
heterogeneous task-risk profile, whereas task-by-task acquisition can certify an over-budget policy before
its high-risk tasks are observed.
 
\vspace{-6pt}
\paragraph{Efficiency analysis.}
Sequential fastest-first evaluation reduces candidate rollouts from $8{,}000$ to $1{,}950$ by stopping
candidates that can no longer be certified. Failure-triggered evaluation further reduces H1 rollouts from
$1{,}000$ to 47 by running the reference only after candidate failures. Together, the two components reduce
total evaluation from $9{,}000$ to $1{,}997$ rollouts, a $77.8\%$ reduction, and wall-clock time from
$38.6$ to $7.0$ hours, an $81.9\%$ reduction (Table~\ref{tab:cost_ablation}).

 

\vspace{-6pt}

\section{Conclusions}
\label{sec:conclusion}

\vspace{-6pt}
We introduce \ours{}, an approach for certified accelerator selection based on acceleration-induced failure, a paired, episode-level risk that cannot in general be identified from average success or per-step measurements alone. \ours{} returns the fastest candidate whose risk can be certified below a user-specified budget and retains the reference when the available evidence is insufficient. Task-balanced sequential testing and failure-triggered reference evaluation substantially reduce the closed-loop rollout cost of certification while preserving the deployment guarantee. Across OpenVLA-OFT and $\pi_{0.5}$ on LIBERO and language-model agents in Crafter, \ours{} applies across different policy classes and acceleration mechanisms, recovers substantial acceleration when supported by the data, and avoids the over-budget selections produced by common heuristic deployment rules. These results suggest that acceleration need not be treated only as a speed--accuracy trade-off: it can instead be selected with explicit, finite-sample control over the failures it introduces.


\bibliography{ref}
\bibliographystyle{iclr2027_conference}

\newpage
\appendix

\section{Theoretical Details}
\label{app:additional_theory}
 
This appendix gives the full procedure (App.~\ref{app:algorithm}), the assumptions and the $p$-value
(App.~\ref{app:assumptions}), the validity proof (App.~\ref{app:validity}), the supermartingale property
it relies on (App.~\ref{app:supermartingale}), the identification results
(App.~\ref{app:identification}), and the retention bounds (App.~\ref{app:retention_theory}).
 
\subsection{The Sequential Procedure}
\label{app:algorithm}
 
Algorithm~\ref{alg:care} summarizes \ours{}. Candidates are evaluated in fastest-first order, each on the
calibration units round by round. The reference is executed on a unit only when the candidate fails there
and the reference outcome is not yet cached (lines 8--15). At each inspection point, the candidate is
certified if its $e$-process reaches $1/\delta_j$ (lines 18--21), and abandoned if it can no longer reach
this threshold even with no further losses (lines 22--25). The fixed-sample version is the special case
with a single inspection at $n_K=n$, where the $e$-process test is replaced by $\widehat p_j\le\delta_j$.
 
\begin{algorithm}[ht]
\caption{\ours: certify-then-select with task-balanced rounds and failure-triggered reference evaluation}
\label{alg:care}
\small
\begin{algorithmic}[1]
\Require Reference policy $\pi_0$; candidates $\Lambda=\{\lambda_1,\ldots,\lambda_M\}$; risk budget $\alpha$;
error levels $\{\delta_j\}_{j=1}^{M}$ with $\sum_j\delta_j\le\delta$; grid $\mathcal{Q}\subset[0,\alpha)$;
profiling set $\mathcal{D}_{\text{prof}}$; calibration units $U_1,\ldots,U_{n_K}$ arranged in $K$ rounds of
$m$ units per task, with inspection points $n_k=kB$, $B=mT$
\Ensure Selected configuration $\widehat\lambda\in\Lambda\cup\{\lambda_0\}$
\State Measure latencies on $\mathcal{D}_{\text{prof}}$ and index $\Lambda$ from fastest ($\lambda_1$) to slowest ($\lambda_M$)
\State Initialize the reference-outcome cache $\mathcal{C}\gets\emptyset$
\For{$j=1,\ldots,M$}
    \State $f_j\gets0$ \Comment{number of AIF losses of $\lambda_j$ so far}
    \For{$k=1,\ldots,K$}
        \For{$i=n_{k-1}+1,\ldots,n_k$}
            \State Run $\pi_{\lambda_j}$ on $U_i$ and observe $Y_j(U_i)$
            \If{$Y_j(U_i)=1$}
                \State $L_{j,i}\gets 0$
                \Comment{candidate succeeds: loss is $0$ whatever $\pi_0$ does}
            \Else
                \If{$U_i$ is not in the reference cache $\mathcal C$}
                    \State Run $\pi_0$ on $U_i$ and store
                    $\mathcal C[U_i]\gets Y_0(U_i)$
                    \Comment{failure-triggered}
                \EndIf
                \State $L_{j,i}\gets \mathcal C[U_i]$
                \Comment{candidate fails: use cached reference outcome}
            \EndIf
            \State $f_j\gets f_j+L_{j,i}$
        \EndFor
        \State Compute $E_j(n_k)$ from $f_j$ by Eq.~\ref{eq:eprocess}
        \If{$E_j(n_k)\ge1/\delta_j$}
            \State \Return $\widehat\lambda=\lambda_j$ \Comment{certified; fastest certified candidate}
        \EndIf
        \State $E_j^{\max}\gets\frac{1}{|\mathcal{Q}|}\sum_{q\in\mathcal{Q}}\bigl(\frac{q}{\alpha}\bigr)^{f_j}\bigl(\frac{1-q}{1-\alpha}\bigr)^{n_K-f_j}$ \Comment{no further losses}
        \If{$E_j^{\max}<1/\delta_j$}
            \State \textbf{break} \Comment{early exit: $\lambda_j$ can no longer be certified}
        \EndIf
    \EndFor
\EndFor
\State \Return $\widehat\lambda=\lambda_0$ \Comment{no candidate certified: keep the reference}
\end{algorithmic}
\end{algorithm}

\subsection{Assumptions and the Hoeffding--Bentkus $p$-Value}
\label{app:assumptions}
 
We write $R_t(\lambda)=\mathbb{E}[L_\lambda(U)\mid\tau=t]$ for the per-task risk, so that $R(\lambda)=\frac1T\sum_tR_t(\lambda)$. The guarantees rest on three assumptions.
\begin{itemize}
\item[(A1)] \emph{Pre-specification.} The candidates, $\alpha$, $\delta$ and its split $\{\delta_j\}$, the round schedule, the grid $\mathcal Q$, and the fastest-first order (measured on $\mathcal{D}_{\text{prof}}$) are fixed before any calibration outcome is observed.
\item[(A2)] \emph{Sampling.} Calibration units are independent, and scenes are drawn i.i.d.\ within each
task from the deployment distribution. Fixed-sample units are either i.i.d.\ from the equal-weight task
mixture or stratified with $n/T$ units per task; sequential rounds contain exactly $m$ units of every task.
\item[(A3)] \emph{Executions.} Each execution launched from a unit depends only on that unit and on
randomness independent of all other executions. Residual simulator nondeterminism is allowed and is part
of the law of $(Y_0(U),Y_\lambda(U))$ that defines $R$.
\end{itemize}

 
\paragraph{The Hoeffding--Bentkus $p$-value.}
For independent $[0,1]$-valued $Z_1,\ldots,Z_n$ with empirical mean $\widehat Z$ and a threshold
$\theta\in(0,1)$,
\begin{equation}
\widehat p(\widehat Z;\theta)=\min\Bigl\{1,\;e\,\Pr\bigl[\operatorname{Bin}(n,\theta)\le\lceil n\widehat Z\rceil\bigr],\;
\exp\bigl(-n\,\mathrm{KL}(\widehat Z\wedge\theta\,\|\,\theta)\bigr)\Bigr\},
\label{eq:hb_pvalue}
\end{equation}
where $a\wedge b=\min\{a,b\}$, $\mathrm{KL}(a\|b)=a\log\frac ab+(1-a)\log\frac{1-a}{1-b}$, and $e$ is
Euler's number. If the average mean $\frac1n\sum_i\mathbb{E}[Z_i]$ is at least $\theta$, the binomial term
(Bentkus's inequality~\citep{bentkus2004hoeffding}) and the exponential term (Hoeffding's
inequality~\citep{hoeffding1963probability}) both upper bound the distribution function
$F(z)=\Pr(\widehat Z\le z)$ at every $z$; neither inequality requires identically distributed variables.
Hence $\widehat p\ge F(\widehat Z)$, and $\Pr(\widehat p\le s)\le s$ for all
$s$~\citep{bates2021distribution}. \ours{} uses $\widehat p_j=\widehat p(\widehat R_j;\alpha)$ with
$Z_i=L_{j,i}$. Under (A2) the means of the $L_{j,i}$ average to $R(\lambda_j)$, both for i.i.d.\ and for
stratified sampling, so $\widehat p_j$ is valid under $H_j$.
 
\subsection{Validity of \ours{}}
\label{app:validity}
 
The proofs use one device. By (A3), all execution randomness can be drawn before the procedure starts,
which fixes an \emph{outcome table} with entries $Y_0(U_i)$ and $Y_j(U_i)$ for every calibration unit and
every candidate. The procedure only decides which entries to reveal; revealing an entry later, or not at
all, does not change it, and revealed entries have the same joint law as the outcomes of an actual run. We
define $L_{j,i}$ and $E_j(n_k)$ from the full table, for every $j$ and $k\le K$.
 
\begin{lemma}[Failure-triggered reference evaluation]
\label{lem:failure_triggered}
Under (A3), on the outcome table: (i) every loss computed with failure-triggered evaluation equals
$L_{j,i}=Y_0(U_i)(1-Y_j(U_i))$, so the evaluated units, all certification decisions, and $\widehat\lambda$
coincide with those of exhaustive paired evaluation; (ii) the reference is executed at most once per unit
and only on units where some evaluated candidate fails, so
$N_{\mathrm{ref}}=|\mathcal{C}|\le\min\{n_K,\sum_jF_j\}$, where $F_j$ is the number of failures of
$\lambda_j$ on the units it is evaluated on.
\end{lemma}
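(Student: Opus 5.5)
Both parts are argued on the outcome table fixed via (A3), so the entries $Y_0(U_i)$ and $Y_j(U_i)$ are determined before the procedure starts and executing a policy merely reveals an entry. For part (i), I would fix a unit $U_i$ and a candidate $\lambda_j$ that Algorithm~\ref{alg:care} evaluates on it, and split on the revealed $Y_j(U_i)$. If $Y_j(U_i)=1$, the algorithm sets $L_{j,i}\gets0$, and $Y_0(U_i)(1-1)=0$ whatever the reference entry is. If $Y_j(U_i)=0$, the algorithm sets $L_{j,i}\gets\mathcal C[U_i]$. Here I would verify the invariant that every cache entry equals the corresponding table entry, i.e.\ $\mathcal C[U]=Y_0(U)$ whenever $U\in\mathcal C$. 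This holds because $\mathcal C$ starts empty, each insertion stores a revealed $Y_0(U_i)$, and entries are never overwritten. Since $\mathcal C[U_i]=Y_0(U_i)=Y_0(U_i)(1-0)$, the computed loss equals the AIF loss in both cases.

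To pass from equal losses to equal decisions, I would compare the failure-triggered run with exhaustive paired evaluation run on the same table, using induction over the sequence of steps (candidate $j$, round $k$, unit $i$) in the algorithm's loop order. The key observation is that every control-flow decision is a deterministic function of quantities computed from the losses $L_{j,i}$ and the pre-specified constants fixed by (A1). These decisions are which unit comes next, the running count $f_j$, $E_j(n_k)$ from Eq.~\ref{eq:eprocess}, the certification test $E_j(n_k)\ge1/\delta_j$, the early-exit test on $E_j^{\max}$, and the final return. Exhaustive evaluation differs only in revealing extra $Y_0$ entries, which never enter these computations. By induction, the two runs visit the same steps with the same loss values, so they make the same certify/abandon choices and return the same $\widehat\lambda$. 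The step needing the most care is this one: I must make sure that no quantity used by the algorithm depends on which entries were revealed, beyond the losses themselves. The outcome-table device is exactly what guarantees this, since revealing an entry or not leaves its value unchanged.

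For part (ii), $\pi_0$ runs on $U_i$ only inside the branch $Y_j(U_i)=0$ and only if $U_i\notin\mathcal C$, and it immediately inserts $U_i$ into $\mathcal C$. So $\pi_0$ is executed at most once per unit, and $N_{\mathrm{ref}}=|\mathcal C|$. Each cache entry is a distinct calibration unit, giving $|\mathcal C|\le n_K$. I would also injectively map each cache entry to the first failure event that triggered its insertion, a pair $(j,i)$ with $Y_j(U_i)=0$ and $\lambda_j$ evaluated on $U_i$. This gives $|\mathcal C|\le\sum_jF_j$, and combining the two bounds yields the minimum.
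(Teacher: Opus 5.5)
Your proposal is correct and follows the paper's proof essentially step for step. Both arguments split on $Y_j(U_i)$ and use that the cache holds true reference outcomes; both then argue by induction that every stopping and certification decision depends only on the observed losses. For (ii), both use the once-per-unit caching argument with each reference execution charged to a distinct candidate failure. Your explicit cache invariant and the injective map from cache entries to their triggering failures just spell out what the paper states in a line.
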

 

\begin{proof}
(i) If $Y_j(U_i)=1$, the procedure assigns $L_{j,i}=0$, which is correct for either value of $Y_0(U_i)$. If $Y_j(U_i)=0$, the procedure uses the cached reference outcome $\mathcal{C}[U_i]$. If $U_i$ has not yet been cached, $\pi_0$ is executed once on $U_i$ and the result is stored as $\mathcal{C}[U_i]=Y_0(U_i)$. Hence, in either case, $L_{j,i}=\mathcal{C}[U_i]=Y_0(U_i)=Y_0(U_i)(1-Y_j(U_i))$, so every computed loss is exactly the same as under exhaustive paired evaluation. Since the next unit evaluated and all subsequent stopping and certification decisions depend only on the losses observed so far, the failure-triggered procedure and exhaustive paired evaluation make the same decisions by induction.

(ii) The reference is executed on $U_i$ only after a candidate fails on that unit and only if $U_i$ is not already present in the reference cache $\mathcal{C}$. Once executed, its outcome is cached and reused for all later candidates. Therefore, each unit triggers at most one reference execution, and every reference execution corresponds to a distinct candidate failure. Thus, $N_{\mathrm{ref}}=|\mathcal{C}|\le \min\{n_K,\sum_j F_j\}$.
\end{proof}

\begin{theorem}[Risk control]
\label{thm:risk_control}
Under (A1)--(A3), \ours{}, in its fixed-sample and sequential versions and with fastest-first search,
early exit, and failure-triggered reference evaluation, satisfies
$\Pr\bigl(R(\widehat\lambda)\ge\alpha\bigr)\le\delta$.
\end{theorem}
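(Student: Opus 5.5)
The plan is a standard Learn-then-Test reduction carried out on the outcome table. Let $\mathcal{H}_0=\{j: R(\lambda_j)\ge\alpha\}$ be the set of unsafe candidates. By (A1), $\mathcal{H}_0$ and the fastest-first order are deterministic with respect to the calibration data. I would define the bad event
\[
\mathcal{B}=\bigcup_{j\in\mathcal{H}_0}\mathcal{B}_j ,
\]
where $\mathcal{B}_j$ is defined \emph{on the full outcome table}. In the fixed-sample version, $\mathcal{B}_j=\{\widehat p_j\le\delta_j\}$. In the sequential version, $\mathcal{B}_j=\{\exists k\le K: E_j(n_k)\ge 1/\delta_j\}$.

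The key deterministic observation is that the procedure returns $\widehat\lambda=\lambda_j$ only if candidate $j$ crossed its threshold at some inspection using the table losses. Fastest-first search, early exit, and stopping at the first certification only \emph{restrict} which crossings are acted upon. They never create a certification that the full table would not contain. By Lemma~\ref{lem:failure_triggered}, the failure-triggered losses coincide with the table losses $L_{j,i}$. Hence $\{R(\widehat\lambda)\ge\alpha\}\subseteq\mathcal{B}$, using $R(\lambda_0)=0<\alpha$ for the fallback. The union bound then gives $\Pr(R(\widehat\lambda)\ge\alpha)\le\sum_{j\in\mathcal{H}_0}\Pr(\mathcal{B}_j)$, so it suffices to show $\Pr(\mathcal{B}_j)\le\delta_j$ for each $j\in\mathcal{H}_0$. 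Together with $\sum_j\delta_j\le\delta$, this yields the theorem.

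For the fixed-sample version, $\Pr(\mathcal{B}_j)\le\delta_j$ is the validity of the Hoeffding--Bentkus $p$-value shown in App.~\ref{app:assumptions}. Under (A2)--(A3), the $L_{j,i}$ are independent with average mean $R(\lambda_j)\ge\alpha$, for both i.i.d.\ and stratified sampling.

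For the sequential version, I would invoke Lemma~\ref{lem:round_supermartingale}. Under $H_j$ with task-balanced rounds, $(E_j(n_k))_{k=0}^K$ with $E_j(0)=1$ is a nonnegative supermartingale with respect to the filtration generated by the rounds. Ville's inequality then gives $\Pr(\sup_k E_j(n_k)\ge 1/\delta_j)\le\delta_j$.

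The main obstacle is making rigorous that the adaptive revelation does not break these per-candidate statements. Data-dependent stopping and skipping of candidates could in principle induce selection effects. The outcome-table device handles this. Every $E_j(n_k)$ and $\widehat p_j$ is defined from the full table regardless of what is revealed, so $\mathcal{B}_j$ is a fixed event whose probability is governed by the table's law. That law, by (A3), matches actual executions.

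I would also verify the claim in Lemma~\ref{lem:round_supermartingale} itself if needed. Conditionally on past rounds, the new round's loss count is a sum of independent Bernoullis with task risks $R_t$ averaging to at least $\alpha$. For each $q<\alpha$, the factor $\prod_i (q/\alpha)^{L_i}((1-q)/(1-\alpha))^{1-L_i}$ has conditional expectation $\prod_t g(R_t)^m$, where $g(r)=1+r\left(\frac{q}{\alpha}-\frac{1-q}{1-\alpha}\right)$ is affine and decreasing in $r$. By AM--GM, this product is at most $g(\bar R)^{mT}\le g(\alpha)^{mT}=1$. Averaging over the grid $\mathcal{Q}$ preserves the supermartingale property.
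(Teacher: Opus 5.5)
Your proposal is correct and follows the paper's proof: reduce to exhaustive paired evaluation on the outcome table via Lemma~\ref{lem:failure_triggered}, get level $\delta_j$ for each unsafe candidate from Hoeffding--Bentkus validity or from Lemma~\ref{lem:round_supermartingale} with Ville's inequality, and finish with the union bound and $R(\lambda_0)=0$; your inclusion argument (the procedure only acts on threshold crossings already present in the table) handles early exit more economically than the paper's monotonicity argument, which proves the stronger fact, not needed for the risk bound, that early exit never changes the certified set. One slip in your optional check of the lemma: the conditional mean of a factor is $g(r)=\frac{1-q}{1-\alpha}+r\bigl(\frac{q}{\alpha}-\frac{1-q}{1-\alpha}\bigr)=1+c(r-\alpha)$ with $c=\frac{q-\alpha}{\alpha(1-\alpha)}$, not $1+rc$, since the latter does not satisfy $g(\alpha)=1$.
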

 
\begin{proof}
By Lemma~\ref{lem:failure_triggered}(i), we may analyze exhaustive paired evaluation on the outcome table.
 
\emph{Each test has level $\delta_j$.} In the fixed-sample version, $\widehat p_j$ is valid
(App.~\ref{app:assumptions}), so $\Pr_{H_j}(\widehat p_j\le\delta_j)\le\delta_j$. In the sequential version,
$(E_j(n_k))_{k=0}^K$ is a non-negative supermartingale under $H_j$ with $E_j(0)=1$
(Lemma~\ref{lem:round_supermartingale}), so Ville's inequality~\citep{ville1939etude} gives
$\Pr_{H_j}(\max_{k\le K}E_j(n_k)\ge1/\delta_j)\le\delta_j$. A candidate is certified only at an inspection
where $E_j(n_k)\ge1/\delta_j$, so this bound holds whatever data-dependent inspection evaluation stops at.
 
\emph{Early exit.} For $q\in\mathcal{Q}$, write $a_q=q/\alpha<1<b_q=(1-q)/(1-\alpha)$. The $q$-term of
Eq.~\ref{eq:eprocess} after $n$ units with $f$ losses is $a_q^fb_q^{n-f}$, which decreases in $f$ and
increases in $n$. Since losses only accumulate, every later value of $E_j$ is at most $E_j^{\max}$, the
value with no further losses at $n_K$ (Algorithm~\ref{alg:care}, line 22). If $E_j^{\max}<1/\delta_j$,
the candidate would never be certified, so abandoning it does not change which candidates are certified.
 
\emph{Selection.} By the union bound, the probability that some unsafe candidate
($R(\lambda_j)\ge\alpha$) is certified is at most $\sum_j\delta_j\le\delta$. Otherwise, every certified
candidate, in particular the deployed one, has $R<\alpha$, and the fallback has $R(\lambda_0)=0$.
\end{proof}
 
\subsection{Supermartingale Property under Task-Balanced Rounds}
\label{app:supermartingale}
 
Let $\mathcal{F}_k$ be generated by the outcome-table rows of rounds $1,\ldots,k$, and for $q\in\mathcal{Q}$
let $M_{j,i}(q)=(q/\alpha)^{L_{j,i}}\bigl((1-q)/(1-\alpha)\bigr)^{1-L_{j,i}}$, so that the $q$-term of
Eq.~\ref{eq:eprocess} is $E_j(n_k,q)=\prod_{i\le n_k}M_{j,i}(q)$.
 
\begin{lemma}
\label{lem:round_supermartingale}
Under (A2), (A3), and $H_j:R(\lambda_j)\ge\alpha$, $(E_j(n_k))_{k\ge0}$ is a non-negative supermartingale
with respect to $(\mathcal{F}_k)$.
\end{lemma}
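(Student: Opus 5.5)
The plan is to prove the supermartingale property separately for each fixed grid value $q\in\mathcal{Q}$, and then average over the grid. A finite average of non-negative supermartingales adapted to the same filtration $(\mathcal{F}_k)$ is again one, and $E_j(0)=1$ since the empty product is $1$. Non-negativity is immediate because each factor $M_{j,i}(q)$ is a product of positive numbers; here $q\ge0$, and if $q=0$ the factor is $0$ or positive, so it is still non-negative.

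Fix $q$ and write $a=q/\alpha\in[0,1)$ and $b=(1-q)/(1-\alpha)>1$. Since $L_{j,i}\in\{0,1\}$, we have $M_{j,i}(q)=b-(b-a)L_{j,i}$. Hence, if unit $i$ belongs to task $t$, then
\[
\mathbb{E}[M_{j,i}(q)]=g(R_t(\lambda_j)),\qquad g(p)=b-(b-a)p .
\]
The function $g$ is affine and decreasing in $p$, satisfies $g(\alpha)=q+(1-q)=1$, and is non-negative on $[0,1]$ because $g(1)=a\ge0$. This affine structure is the key tool for the rest of the argument.

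The filtration step proceeds as follows. By (A2)--(A3), the units in round $k+1$ are independent of each other and of all outcome-table rows of rounds $1,\ldots,k$, including the rows for other candidates and for the reference. Scenes within a task are i.i.d., so each unit of task $t$ in that round has conditional mean loss $R_t(\lambda_j)$. Therefore
\[
\mathbb{E}\bigl[E_j(n_{k+1},q)\mid\mathcal{F}_k\bigr]=E_j(n_k,q)\prod_{t=1}^{T}g\bigl(R_t(\lambda_j)\bigr)^{m}.
\]
It remains to show that the round multiplier $\prod_t g(R_t)^m$ is at most $1$ under $H_j$.

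This is the main obstacle, because tasks are heterogeneous: some individual $R_t$ may lie below $\alpha$, making $g(R_t)>1$. A single unit's factor therefore need not have mean at most $1$, which is why inspection is done only at round boundaries. The factors are non-negative, so I would apply AM--GM followed by the affinity of $g$:
\[
\prod_{t}g(R_t)^{m}\le\Bigl(\tfrac1T\sum_t g(R_t)\Bigr)^{mT}=g\bigl(R(\lambda_j)\bigr)^{B}\le g(\alpha)^{B}=1.
\]
The second inequality uses $R(\lambda_j)\ge\alpha$ under $H_j$ together with the monotonicity of $g$. This balance is exactly what exact per-task counts $m$ buy: with unequal counts, the geometric-mean bound would weight tasks unequally and could exceed $1$. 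Combining this bound with the averaging over $q$ gives $\mathbb{E}[E_j(n_{k+1})\mid\mathcal{F}_k]\le E_j(n_k)$, which completes the proof.
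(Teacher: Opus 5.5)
Your proof is correct and follows the paper's argument step for step. You fix $q$, write each factor's conditional mean as the affine function $1+c(\mu_i-\alpha)$ (your $g$), and factor the expectation over the independent units of a round. You then use AM--GM, affinity and the balanced per-task counts to bound the round multiplier by $g(R(\lambda_j))^B\le 1$, and average over $\mathcal{Q}$. The only difference is that the paper states the factor bound for general $[0,1]$-valued losses via $a^zb^{1-z}\le za+(1-z)b$ (an equality in your binary case), which it later reuses for the retention-budget variant with $Z_\beta$.
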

 
\begin{proof}
Fix $q$ and round $k$. Its units are fresh draws, independent of $\mathcal{F}_{k-1}$ and of each other, and
unit $i$ of task $t(i)$ has conditional loss mean $\mu_i=R_{t(i)}(\lambda_j)$. Since each round contains
$m$ units of every task, $\frac1B\sum_i\mu_i=R(\lambda_j)\ge\alpha$. With $c=\frac{q-\alpha}{\alpha(1-\alpha)}<0$,
$\mathbb{E}[M_{j,i}(q)\mid\mathcal{F}_{k-1}]\le1+c(\mu_i-\alpha)$, with equality for binary losses; this holds for any $[0,1]$-valued loss because $a^zb^{1-z}\le za+(1-z)b$ for $z\in[0,1]$, and the right-handside is non-negative. By conditional independence and the inequality of arithmetic and geometric means,
\begin{equation}
\mathbb{E}\Bigl[\prod_{i\in\text{round }k}M_{j,i}(q)\Bigm|\mathcal{F}_{k-1}\Bigr]
=\prod_{i\in\text{round }k}\bigl(1+c(\mu_i-\alpha)\bigr)
\le\bigl(1+c(R(\lambda_j)-\alpha)\bigr)^{B}\le1 .
\end{equation}
Hence each $E_j(n_k,q)$ is a non-negative supermartingale, and so is their average $E_j(n_k)$.
\end{proof}


Balanced rounds are what make the average conditional mean of each round equal $R(\lambda_j)$; a round
drawn only from low-risk tasks would have an increment with expectation above one.
 
\subsection{Identification: Why Paired, Complete Episodes Are Needed}
\label{app:identification}
 
\paragraph{Marginal success rates.}
Let $s_0=\Pr(Y_0{=}1)$, $s_\lambda=\Pr(Y_\lambda{=}1)$, and $p_{ab}=\Pr(Y_0{=}a,Y_\lambda{=}b)$ under the
task mixture, so $R(\lambda)=p_{10}$. Fixing the marginals leaves one free parameter, $p_{11}$, which can
take any value in $[\max\{0,s_0+s_\lambda-1\},\min\{s_0,s_\lambda\}]$. Hence $R(\lambda)=s_0-p_{11}$ can be
any value in $[\max\{0,s_0-s_\lambda\},\,\min\{s_0,1-s_\lambda\}]$. For example, with $s_0=s_\lambda=0.9$,
any $R(\lambda)\in[0,0.1]$ is possible. The success gap satisfies $s_0-s_\lambda=R(\lambda)-p_{01}$: episodes
the candidate repairs ($p_{01}$) cancel episodes it breaks.
 
\paragraph{Individual decisions.}
Consider an accelerator that replaces the reference action with an accelerated action at a set of
decisions $\mathcal{T}_\lambda\subseteq\{0,\ldots,H-1\}$. Its \emph{single-decision effect} at
$t\in\mathcal{T}_\lambda$ is the AIF loss of the controller that accelerates decision $t$ alone and follows
the reference elsewhere.
 
\begin{theorem}[Single-decision effects do not determine AIF risk]
\label{thm:single_decision_nonid}
There exist two accelerators on the same task, using the same accelerated action, whose single-decision
effects are all zero, yet whose AIF risks are $1$ and $0$.
\end{theorem}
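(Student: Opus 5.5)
The proof is a construction. I will build a small deterministic task, with a single initial scene and no execution randomness, whose horizon is $H=2$ decisions. Both accelerators use the same accelerated action $b$, which replaces the reference action $a$. Set $\mathcal{T}_{\lambda}=\{0,1\}$ for the first accelerator and $\mathcal{T}_{\lambda'}=\emptyset$ for the second. The single-decision effects of $\lambda'$ are vacuously zero and its AIF risk is $0$, since it coincides with the reference. If one insists that both accelerators act nontrivially, $\lambda'$ can instead accelerate only decision $0$. The design must then ensure that this single change is harmless, and the comparison becomes a set of size two versus a set of size one.

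The key step is choosing the dynamics so that each deviation alone is harmless but two together are fatal. I will use a state space of counters with start $s_0$. The reference chooses $a$ everywhere. Taking $b$ at step $0$ moves to a detour state $d$, and taking $a$ at $d$ still reaches the goal. Taking $a$ at step $0$ moves to the on-path state $s_1$, where both $a$ and $b$ reach the goal. Taking $b$ at $d$ leads to failure.

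Now check the effects one at a time.
\begin{itemize}
\item The reference path $s_0 \xrightarrow{a} s_1 \xrightarrow{a} \text{goal}$ succeeds, so $Y_0=1$.
\item Accelerating only decision $0$ gives $s_0\to d\to$ goal, so the single-decision effect at $t=0$ is $0$.
\item Accelerating only decision $1$ gives $s_0\to s_1\xrightarrow{b}$ goal, so the effect at $t=1$ is $0$.
\item Accelerating both decisions gives $s_0\to d\xrightarrow{b}$ failure, so $L=1$ and $R(\lambda)=1$.
\end{itemize}

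The second accelerator is $\mathcal{T}_{\lambda'}=\{0\}$, which ends at goal, so $R(\lambda')=0$ and its only single-decision effect is also zero. Thus all single-decision effects vanish for both accelerators while the risks are $1$ and $0$.

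The main obstacle is making "the same accelerated action" meaningful, since the action must be the same map in both cases. I will define a single accelerated policy $\pi_b$ that outputs $b$ at every state, and let each accelerator differ only in its set $\mathcal{T}$. The remaining work is bookkeeping: verifying that with one deterministic unit the conditional expectations equal the indicator values, so $R\in\{0,1\}$ exactly. One should also note that the example can be embedded in any horizon by padding with steps where $a=b$.
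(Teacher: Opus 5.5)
Your construction is correct and is essentially the paper's argument at $H=2$. The paper uses a scalar error state with reference action $x\mapsto\max\{x-1,0\}$, accelerated action $x\mapsto x+1$, and success iff $x_H\le 1$; this yields exactly your transition structure ($d$ is $x_1=1$), and its two accelerators, "every decision" versus "every other decision," are your $\{0,1\}$ and $\{0\}$. The paper states the construction directly for general $H\ge 2$ rather than padding, and you are right to replace the degenerate choice $\mathcal{T}_{\lambda'}=\emptyset$ with $\{0\}$.
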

 
\begin{proof}
Let the state be a scalar error $x_t\ge0$ with $x_0=0$, and let an episode of $H\ge2$ decisions succeed
if and only if $x_H\le1$. The reference action sets $x_{t+1}=\max\{x_t-1,0\}$, so the reference keeps
$x_t\equiv0$ and succeeds. The accelerated action sets $x_{t+1}=x_t+1$. Accelerating a single decision
raises the error to $1$, after which reference actions return it to $0$ (or leave $x_H=1$ if $t=H-1$), so
every single-decision effect is zero. Accelerating every decision gives $x_H=H\ge2$ and fails, so its risk
is $1$. Accelerating every other decision keeps $x_t\le1$ throughout and succeeds, so its risk is $0$.
\end{proof}
 
The fully accelerated controller reaches states (persistent error) that no single-decision intervention
from the reference trajectory reaches, which is why AIF risk must be measured on complete episodes.
 
\subsection{Retention Bounds and Retention Budgets}
\label{app:retention_theory}
 
Let $s_0 := \Pr(Y_0=1)$ and assume $s_0>0$. Retention is $\operatorname{Ret}(\lambda)=1-\rho$ with $\rho=R(\lambda)/\Pr(Y_0{=}1)\in[0,1]$. Because it is
a ratio of two unknown quantities, we bound it by inverting a family of tests rather than dividing
estimates.

\begin{theorem}[Retention bound]
\label{thm:retention_zbeta}
For $\beta\in(0,1]$ let $Z_\beta(U)=\bigl(L_\lambda(U)-\beta Y_0(U)+\beta\bigr)/(1+\beta)$, and let
$\widehat p(\beta)=\widehat p\bigl(\widehat Z_\beta;\frac{\beta}{1+\beta}\bigr)$ as in
Eq.~\ref{eq:hb_pvalue}. Let $\mathcal{S}=\{\beta\in(0,1]:\widehat p(\beta)>\delta'\}$ (with
$\sup\emptyset=0$) and let $\bar b\ge\sup\mathcal{S}$. Then
$\Pr[\operatorname{Ret}(\lambda)\ge1-\bar b]\ge1-\delta'$. With $\delta'=\delta/M$ for every candidate, the
bounds hold simultaneously, and hence for $\widehat\lambda$, with probability at least $1-\delta$.
\end{theorem}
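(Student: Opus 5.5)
The proof inverts a family of tests. The key identity is that, for each $\beta\in(0,1]$, the transformed loss $Z_\beta(U)$ lies in $[0,1]$ and its mean crosses the threshold $\beta/(1+\beta)$ exactly when $\rho$ crosses $\beta$.

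\emph{Range of $Z_\beta$.} Since $L_\lambda(U)\in\{0,1\}$ and $Y_0(U)\in\{0,1\}$ with $L_\lambda\le Y_0$, the numerator $L_\lambda-\beta Y_0+\beta$ takes the values $\beta$ (when $Y_0=0$), $1$ (when $L_\lambda=1$), or $0$ (when $Y_0=1$ and $L_\lambda=0$). All of these lie in $[0,1+\beta]$, so $Z_\beta\in[0,1]$ and Eq.~\ref{eq:hb_pvalue} applies.

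\emph{Mean of $Z_\beta$.} Under the task mixture,
\[
\mathbb{E}[Z_\beta]=\frac{R(\lambda)-\beta s_0+\beta}{1+\beta}.
\]
Hence $\mathbb{E}[Z_\beta]\ge \beta/(1+\beta)$ if and only if $R(\lambda)\ge\beta s_0$, that is, if and only if $\rho\ge\beta$. As in App.~\ref{app:assumptions}, under (A2) the per-unit means average to this mixture mean for both i.i.d.\ and stratified sampling. The Hoeffding--Bentkus argument, which does not need identical distributions, therefore shows that $\widehat p(\beta)$ is a valid $p$-value for the null $H^\beta:\rho\ge\beta$.

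\emph{Inverting at the true $\rho$.} If $\rho=0$, then $\operatorname{Ret}=1\ge1-\bar b$ because $\bar b\ge0$, and the claim holds trivially. Otherwise $\beta^*:=\rho\in(0,1]$, and $H^{\beta^*}$ holds. Validity gives $\Pr(\widehat p(\beta^*)\le\delta')\le\delta'$. On the complementary event we have $\beta^*\in\mathcal S$, so $\rho\le\sup\mathcal S\le\bar b$, i.e.\ $\operatorname{Ret}(\lambda)\ge1-\bar b$, with probability at least $1-\delta'$.

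The only care needed here is that $\beta^*$ is a fixed, unknown, non-random parameter. The argument therefore uses a single test at that point, with no uniformity over $\beta$, and no multiplicity correction over the continuum of $\beta$ is required. This is the main subtlety. If $\bar b$ is computed on a finite grid, one chooses $\bar b$ to be the next grid point above the largest non-rejected value, so that $\bar b\ge\sup\mathcal S$ still holds and the statement covers it.

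\emph{Simultaneity.} Take $\delta'=\delta/M$ for each candidate. The failure event for candidate $j$ has probability at most $\delta/M$, and a union bound over the $M$ candidates gives that all bounds hold simultaneously with probability at least $1-\delta$. The fallback $\lambda_0$ has $\rho=0$ and needs no bound. Because the simultaneous event covers every candidate, it covers the data-dependent $\widehat\lambda$ as well, whichever candidate the selection returns.
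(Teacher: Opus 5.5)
Your proof is correct and follows the paper's argument essentially step for step: it uses the three-valued range of $Z_\beta$, the identity $\mathbb{E}[Z_\beta]=(R(\lambda)-\beta s_0+\beta)/(1+\beta)$, a single valid Hoeffding--Bentkus test at the fixed unknown point $\beta=\rho$ (with $\rho=0$ handled by $\bar b\ge 0$), and a union bound over the $M$ candidates. One caution on your side remark about grids: $\widehat p(\beta)$ is not monotone in $\beta$ (the ceiling in the binomial term jumps), so taking the grid point just above the largest non-rejected grid value does not by itself ensure $\bar b\ge\sup\mathcal{S}$, which is why the paper excludes whole cells between integer crossings instead; this does not affect the theorem, since it takes $\bar b\ge\sup\mathcal{S}$ as a hypothesis.
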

 
\begin{proof}
Since $L_\lambda\le Y_0$, the pair $(Y_0,L_\lambda)$ is $(0,0)$, $(1,0)$, or $(1,1)$, giving
$Z_\beta=\frac{\beta}{1+\beta}$, $0$, or $\frac{1}{1+\beta}$, so $Z_\beta\in[0,1]$. Taking expectations,
$\mathbb{E}[Z_\beta]=\bigl(R(\lambda)-\beta\Pr(Y_0{=}1)+\beta\bigr)/(1+\beta)$, which equals
$\frac{\beta}{1+\beta}$ exactly when $\beta=\rho$. If $\rho=0$, the claim holds since $\bar b\ge0$. If
$\rho>0$, the test at $\beta=\rho$ has a true null, so $\Pr(\widehat p(\rho)\le\delta')\le\delta'$
(App.~\ref{app:assumptions}). Hence, with probability at least $1-\delta'$, $\rho\in\mathcal{S}$, so
$\rho\le\bar b$. The simultaneous statement follows from the union bound.
\end{proof}
 
\paragraph{Computing $\bar b$.}
$\widehat p(\beta)$ is not monotone in $\beta$, because the ceiling in the binomial term jumps as
$n\widehat Z_\beta$ crosses integers, so a bisection can return an invalid bound. Instead, we exclude
$\beta$ values where either term of Eq.~\ref{eq:hb_pvalue} is at most $\delta'$. Since
$\widehat Z_\beta$ is monotone in $\beta$, the integer crossings have closed form, and between them the
binomial term is monotone; the exponential term is excluded on a grid with an explicit Lipschitz
constant. Scanning down from $\beta=1$, $\bar b$ is the upper end of the first cell that cannot be
excluded, so $\bar b\ge\sup\mathcal{S}$.

\paragraph{Certifying a retention budget.}
For a retention budget $1-\beta$, let $\theta_\beta=\beta/(1+\beta)$ and test, for each candidate,
$H^{\mathrm{Ret}}_j:\operatorname{Ret}(\lambda_j)\le1-\beta$. By the proof of Theorem~3,
$H^{\mathrm{Ret}}_j$ holds if and only if $\mathbb{E}[Z_\beta]\ge\theta_\beta$, a mean test on the
$[0,1]$-valued loss $Z_\beta$. The fixed-sample test uses $\widehat p(\widehat Z_\beta;\theta_\beta)$,
which is valid by App.~\ref{app:assumptions}. The sequential test uses Eq.~\ref{eq:eprocess} with
$\alpha$ replaced by $\theta_\beta$ and $f_j(n_k)$ by $\sum_{i\le n_k}Z_{\beta,j,i}$;
Lemma~\ref{lem:round_supermartingale} still applies (see its proof), and early exit remains valid since
$Z_\beta\ge0$. Theorem~\ref{thm:risk_control} then holds with $\operatorname{Ret}$ in place of $R$: with
probability at least $1-\delta$, $\operatorname{Ret}(\widehat\lambda)>1-\beta$, and the fallback has
$\operatorname{Ret}(\lambda_0)=1$. Since $Z_\beta$ depends on $Y_0$ even when the candidate succeeds, this
mode runs the reference once on every calibration unit instead of using failure-triggered evaluation.
 
\paragraph{Retention under failure-triggered evaluation.}
The bound above needs $Y_0$ on every unit, which failure-triggered evaluation does not provide. In that
mode we instead run the reference on $n_0$ calibration units designated in advance, whose empirical
success rate $\widehat s_0$ gives, by Hoeffding's inequality, $\Pr(Y_0{=}1)\ge\ell_0=\widehat
s_0-\sqrt{\ln(1/\delta_0)/(2n_0)}$ with probability at least $1-\delta_0$. Combined with
$R(\widehat\lambda)<\alpha$ (Theorem~\ref{thm:risk_control}), the union bound gives
$\operatorname{Ret}(\widehat\lambda)>1-\alpha/\ell_0$ with probability at least $1-\delta-\delta_0$
whenever $\ell_0>0$ (and $\operatorname{Ret}=1$ if $\widehat\lambda=\lambda_0$). These reference outcomes
are cached and reused during certification. We use $\delta_0=0.01$ and split the remaining $0.04$ across
the thirty-two suite--candidate hypotheses, so the combined statement holds with probability at least
$0.95$.

\section{Additional Experimental Results}
\label{app:experiments}

\subsection{Design-Choice Validation and Cost Ablations}
\label{app:design_analysis}

This section provides the complete analyses underlying
Section~\ref{sec:ablation}. We first examine the three choices required
for valid certification and then isolate the two mechanisms that reduce
evaluation cost.

\vspace{-6pt}
\paragraph{Episode-level outcomes.}
We first compare accelerating a single policy decision with applying
acceleration throughout the complete episode. The former measures a local
intervention, whereas the latter measures the deployment risk targeted by
\ours{}. We show the results in Table \ref{tab:local_episode}. 

\begin{table}[H]
\centering
\caption{\textbf{Local acceleration effects do not determine
whole-episode AIF.} The relationship varies across suites because an
accelerated action can alter subsequent states, observations, and actions.}
\label{tab:local_episode}
\vspace{3pt}
\begin{tabular}{lcc}
\toprule
Suite &
\shortstack{One accelerated\\decision} &
\shortstack{Whole-episode\\AIF} \\
\midrule
Goal    & $2.3\%$ & $2.6\%$ \\
Spatial & $0.7\%$ & $0.8\%$ \\
Object  & $4.3\%$ & $1.0\%$ \\
Long    & $9.5\%$ & $2.7\%$ \\
\bottomrule
\end{tabular}
\vspace{-10pt}
\end{table}

The two quantities are close on Goal and Spatial but differ by factors of
$4.31$ and $3.52$ on Object and Long. Their relationship is therefore not
stable across deployments. This result supports measuring the terminal
outcomes of complete paired episodes rather than using local confidence or
disagreement signals as a proxy for AIF.

\paragraph{Family correction.}
We next isolate the effect of searching over multiple candidate policies.
We simulate ten candidates with known risks around $\alpha=0.1$, use 200
calibration episodes per trial, and repeat the experiment for 4,000 trials.
Each method selects from the same candidate family and observations. We present the results in Table \ref{tab:multiplicity}. 

\begin{table}[ht]
\centering
\caption{\textbf{Family correction prevents selection-induced risk
inflation.} Searching over ten candidates without correction deploys an
over-budget policy in $46.0\%$ of trials, whereas family-corrected testing
remains at or below $0.15\%$.}
\label{tab:multiplicity}
\vspace{3pt}
\begin{tabular}{lc}
\toprule
Selection rule & Over-budget deployment \\
\midrule
Uncorrected empirical selection & $46.0\%$ \\
Family-corrected testing        & $\leq 0.15\%$ \\
\bottomrule
\end{tabular}
\vspace{-10pt}
\end{table}

The uncorrected rule repeatedly selects candidates whose empirical risks
appear favorable because of sampling variation. Correcting the complete
family prevents this selection effect from invalidating the deployment
guarantee.

\paragraph{Task-balanced acquisition.}
The deployment risk averages over a fixed mixture of tasks. Sequential
certification must therefore preserve that mixture at every interim look.
We compare the balanced round-robin schedule used by \ours{} with
task-by-task acquisition, which completes one task before proceeding to
the next. Each entry in Table~\ref{tab:balanced_order} is the exact
probability of certifying an over-budget candidate.


\begin{table}[H]
\centering
\caption{\textbf{Balanced acquisition preserves sequential validity.}
Balanced rounds remain below the candidate-level error budget for every tested task-risk profile, whereas task-by-task
ordering can certify an over-budget policy before its high-risk tasks are
evaluated.}
\label{tab:balanced_order}
\vspace{3pt}
\resizebox{0.7\linewidth}{!}{
\begin{tabular}{lcc}
\toprule
Task-risk profile &
Balanced rounds &
Task by task \\
\midrule
Eight safe, two risky; $R=0.050$
& $1.84\times10^{-4}$ & $1.00$ \\
Graded; $R=0.050$
& $4.37\times10^{-4}$ & $1.42\times10^{-2}$ \\
Homogeneous; $R=0.050$
& $4.77\times10^{-4}$ & $4.77\times10^{-4}$ \\
One high-risk task; $R=0.060$
& $1.08\times10^{-6}$ & $1.00$ \\
Six safe, four risky; $R=0.052$
& $2.43\times10^{-4}$ & $1.00$ \\
\bottomrule
\end{tabular}
}
\end{table}

Balanced acquisition remains valid even when AIF risk varies sharply
across tasks. Task-by-task ordering can initially expose only low-risk
tasks and certify a policy before observing the tasks responsible for its
true deployment risk.

\paragraph{Evaluation-cost components.}
Finally, we separate the two mechanisms used by \ours{} to reduce
certification cost. We present the results in Table \ref{tab:cost_ablation}. Sequential fastest-first evaluation tests policies in
increasing order of measured inference cost and stops a candidate when
certification is no longer attainable. Failure-triggered evaluation runs
H1 only when a candidate fails, because the paired AIF outcome is already
zero whenever the candidate succeeds.

\begin{table}[ht]
\centering
\caption{\textbf{Component-wise ablation of certification cost.}
Sequential fastest-first evaluation reduces candidate rollouts, while
failure-triggered evaluation reduces H1 rollouts. Combining both components
reduces total rollouts by $77.8\%$ and wall-clock time by $81.9\%$ relative
to fixed-$n$ test-all evaluation.}
\label{tab:cost_ablation}
\vspace{3pt}
\resizebox{0.88\linewidth}{!}{
\begin{tabular}{lrrrr}
\toprule
Procedure &
\shortstack{Candidate\\rollouts} &
\shortstack{H1\\rollouts} &
\shortstack{Total\\rollouts} &
\shortstack{Wall-clock\\time} \\
\midrule
Fixed-$n$ test-all
& $8{,}000$ & $1{,}000$ & $9{,}000$ & $38.6$ h \\
Fixed-$n$ + failure-triggered H1
& $8{,}000$ & $349$ & $8{,}349$ & $33.1$ h \\
Sequential fastest-first + full H1
& $1{,}950$ & $1{,}000$ & $2{,}950$ & $16.3$ h \\
\textbf{\ours}
& $\mathbf{1{,}950}$ & $\mathbf{47}$ &
$\mathbf{1{,}997}$ & $\mathbf{7.0}$ h \\
\bottomrule
\end{tabular}
}
\end{table}

\begin{table}[ht]
\centering
\caption{\textbf{Sensitivity to risk budget and calibration size.}
A larger risk budget or more calibration evidence reduces fallback and
increases speedup, while \ours{} maintains an observed over-budget rate
of at most $0.02\%$.}
\label{tab:sensitivity}
\vspace{2pt}
\footnotesize
\setlength{\tabcolsep}{1pt}
\begin{tabular}{ccccc}
\toprule
$\alpha (\%)$ & $n$ &
\shortstack{Over budget (\%)} &
\shortstack{Fallback (\%)} &
\shortstack{Speedup ($\times$)} \\
\midrule
2.0 & 250 & 0.00 & 100.0 & 1.00 \\
2.5 & 250 & 0.00 & 100.0 & 1.00 \\
3.0 & 250 & 0.00 & 69.5  & 3.51 \\
5.0 & 250 & 0.00 & 11.7  & 8.88 \\
10.0 & 250 & 0.00 & 0.0   & 11.25 \\
\midrule
2.5 & 500   & 0.02 & 56.3 & 4.59 \\
2.5 & 1,000 & 0.01 & 15.0 & 8.48 \\
2.5 & 2,000 & 0.00 & 1.4  & 9.99 \\
\bottomrule
\end{tabular}
\vspace{-5pt}
\end{table}

Sequential fastest-first evaluation provides the largest individual
saving, reducing candidate rollouts by $75.6\%$. Failure-triggered
evaluation provides a complementary saving by reducing reference
evaluation to 47 rollouts after sequential candidate stopping. Together,
the two components produce the full $77.8\%$ rollout reduction achieved
by \ours{}.

\subsection{Sensitivity Analysis}
\label{app:sensitivity}

We examine how the risk budget $\alpha$ and calibration size $n$ affect
\ours{} for sensitivity analysis. We report over-budget deployment, fallback to H1, and mean speedup relative to H1.

Table~\ref{tab:sensitivity} shows the results. At the default setting ($\alpha=5.0\%$, $n=250$), \ours{} achieves an $8.88\times$ mean speedup with $11.7\%$ fallback and
no observed over-budget deployment. Tightening $\alpha$ increases fallback,
whereas relaxing it allows faster policies to be certified. Under the
stricter $\alpha=0.025$ budget, increasing $n$ from 500 to 2,000 reduces
fallback from $56.3\%$ to $1.4\%$ and raises mean speedup from
$4.59\times$ to $9.99\times$.  Thus, \ours{} becomes less
conservative as the budget or calibration evidence grows, while maintaining risk control. The default setting provides a practical
operating point considering evaluation cost, while \ours{} adapts robustly to other risk budgets and calibration sizes.

\section{Benchmarks}
\label{app:benchmarks}

\paragraph{LIBERO.}
LIBERO~\citep{liu2023libero} is a simulated tabletop manipulation benchmark in which a Franka Panda arm
follows language instructions. We use its four ten-task suites (Figure~\ref{fig:libero_tasks}):
\emph{Spatial} varies object layouts, \emph{Object} varies the target object, \emph{Goal} varies the
goal in a fixed scene, and \emph{Long} chains several subgoals. Each task provides $50$ fixed initial
states, giving $500$ scenes per suite, and an episode succeeds if the task's goal conditions are met
before the step limit.

\paragraph{Crafter.}
Crafter~\citep{hafner2021benchmarking} is a procedurally generated 2D survival game with discrete
actions and a status bar tracking health, food, drink, energy, and inventory
(Figure~\ref{fig:crafter_tasks}). We use two achievements as tasks, \texttt{collect\_wood} and
\texttt{collect\_drink}, and an episode succeeds if the agent unlocks the achievement within the step
budget. It tests whether \ours{} applies unchanged to language-model agents.
\begin{figure}[t]
\centering
\includegraphics[width=\linewidth]{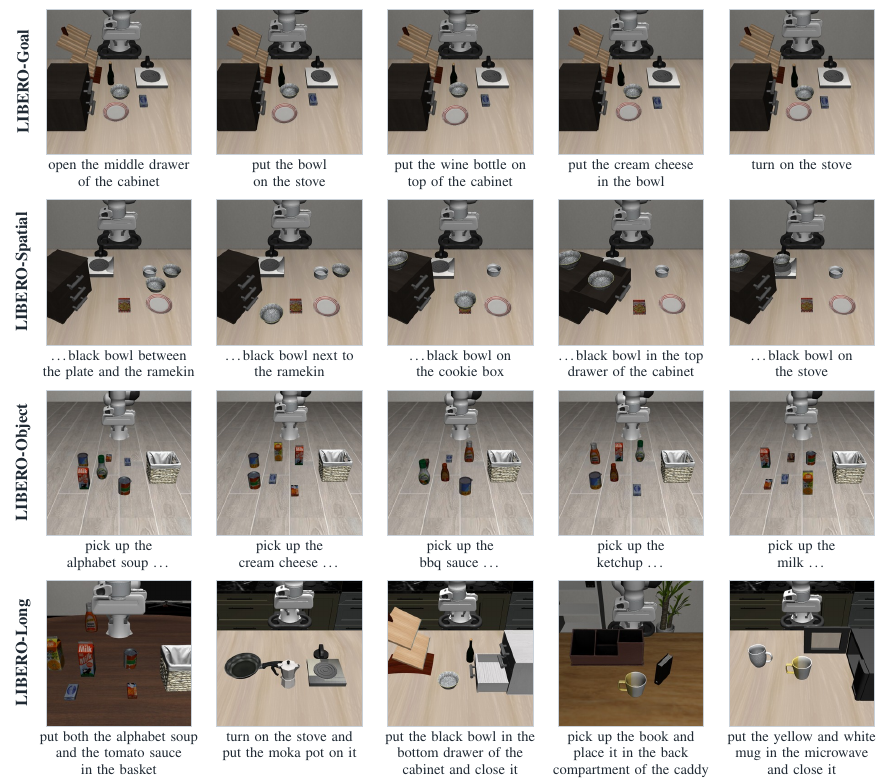}
\caption{\textbf{LIBERO task suites.} Initial scenes of five of the ten tasks in each suite, rendered from the third-person agent-view camera, with the language instruction below each scene. Spatial instructions have the form ``pick up the black bowl \ldots\ and place it on the plate'' and Object instructions ``pick up the \ldots\ and place it in the basket''; only the distinguishing part is shown. Each evaluation unit fixes a task, one initial scene, and the random seeds.}
\label{fig:libero_tasks}
\end{figure}
 
\begin{figure}[t]
\centering
\includegraphics[width=\linewidth]{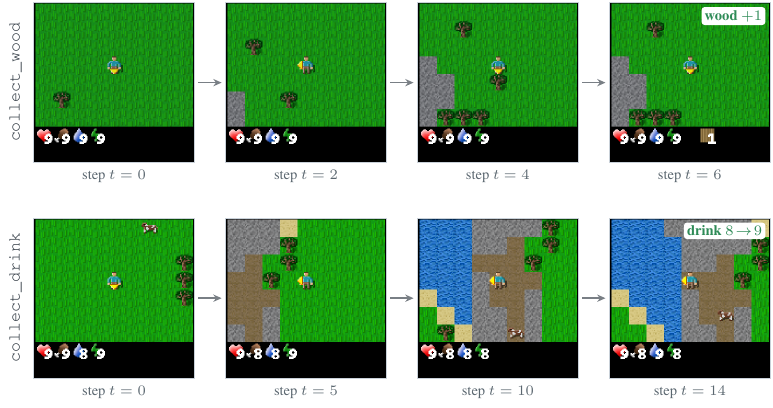}
\caption{\textbf{Crafter tasks.} Example episodes of \texttt{collect\_wood} (top) and
\texttt{collect\_drink} (bottom). The status bar shows health, food, drink, and energy, followed by the
inventory; an episode succeeds once wood enters the inventory or the agent drinks water. Frames are
rendered from the environment with a scripted policy for illustration and are not rollouts of the
evaluated language-model agents.}
\label{fig:crafter_tasks}
\end{figure}


\end{document}